\documentclass{article}

\PassOptionsToPackage{numbers, compress}{natbib}

\usepackage[preprint]{neurips_2026}

\usepackage{amsmath,amssymb,amsthm,mathtools}
\usepackage[utf8]{inputenc}
\usepackage[T1]{fontenc}
\usepackage{hyperref}
\usepackage{url}
\usepackage{booktabs}
\usepackage{amsfonts}
\usepackage{nicefrac}
\usepackage{microtype}
\usepackage{xcolor}
\usepackage{bm}
\usepackage{graphicx}
\usepackage{algorithm}
\usepackage{algpseudocode}
\usepackage{enumitem}
\newcommand{\indic}{\mathbf{1}}
\usepackage{wrapfig}
\usepackage{subcaption}
\usepackage{multirow}

\newtheorem{theorem}{Theorem}
\newtheorem{proposition}{Proposition}

\newtheorem{definition}{Definition}

\newtheorem{corollary}{Corollary}

\newcommand{\E}{\mathbb{E}}
\newcommand{\R}{\mathbb{R}}
\newcommand{\Prb}{\mathbb{P}}
\newcommand{\cD}{\mathcal{D}}
\newcommand{\cB}{\mathcal{B}}
\newcommand{\cX}{\mathcal{X}}

\newcommand{\cL}{\mathcal{L}}

\newcommand{\Acc}{\mathrm{Acc}}
\newcommand{\Cost}{\mathrm{C}}
\newcommand{\Reg}{\mathrm{Reg}}
\DeclareMathOperator*{\argmax}{arg\,max}

\definecolor{cblue}{HTML}{3366CC}
\definecolor{cred}{HTML}{CC3333}
\definecolor{cgray}{HTML}{666666}

\title{Adaptive Test-Time Compute Allocation for Reasoning LLMs via Constrained Policy Optimization}

\author{%
  Zhiyuan Zhai\thanks{Correspondence to: \texttt{22110720067@m.fudan.edu.cn}} \\
  Fudan University \\
  \texttt{22110720067@m.fudan.edu.cn}
  \And
  Bingcong Li \\
  ETH Zurich \\
  \texttt{bingtsongli@gmail.com}
  \And
  Bingnan Xiao \\
  Fudan University \\
  \texttt{xbn20000224@gmail.com}
  \And
  Ming Li \\
  Guangming Lab \\
  \texttt{ming.li@u.nus.edu}
  \And
  Xin Wang \\
  Fudan University \\
  \texttt{xwang11@fudan.edu.cn}
}

\begin{document}

\maketitle

\begin{abstract}
Test-time compute scaling, the practice of spending extra computation during inference via repeated sampling, search, or extended reasoning, has become a powerful lever for improving large language model performance.
Yet deploying these techniques under finite inference budgets requires a decision that current systems largely ignore: \emph{which inputs deserve more compute, and which can be answered cheaply?}
We formalize this as a constrained optimization problem (maximize expected accuracy subject to an average compute budget) and solve it with a two-stage \textsc{Solve-then-Learn} pipeline.
In the \emph{solve} stage, Lagrangian relaxation decomposes the global constraint into per-instance sub-problems, each admitting a closed-form oracle action that optimally prices accuracy against cost.
We prove that the induced cost is monotone in the dual variable, enabling exact budget targeting via binary search.
In the \emph{learn} stage, a lightweight classifier is trained to predict oracle actions from cheap input features, amortizing the allocation rule for real-time deployment.
We establish that the task-level regret of the learned policy is bounded by its imitation error times the worst-case per-instance gap, yielding a clean reduction from constrained inference to supervised classification.
Experiments on MATH and GSM8K with three LLMs (DeepSeek-V3, GPT-4o-mini, Qwen2.5-7B) show that our method consistently outperforms uniform and heuristic allocation baselines, achieving up to 12.8\% relative accuracy improvement on MATH under matched budget constraints, while closely tracking the Lagrangian oracle upper bound with over 91\% imitation accuracy.
\end{abstract}

\noindent Code is available at \url{https://github.com/zhiyuanZhai20/AdaCompute-LLM}.

\section{Introduction}\label{sec:intro}

The dominant recipe for improving large language model (LLM) performance has long been to increase pre-training compute: more parameters, more data, more FLOPs~\cite{kaplan2020scaling, hoffmann2022training}.
A striking recent development is that comparable, and sometimes larger, gains can be achieved at \emph{inference time}, by spending extra computation on each query through techniques such as chain-of-thought reasoning~\cite{wei2022chain}, self-consistency voting~\cite{wang2023selfconsistency}, tree search~\cite{yao2024tree}, or process-reward verification~\cite{lightman2024lets}.
This has been termed \emph{test-time scaling}, and it underpins the capabilities of frontier reasoning systems~\cite{openai2024o1, deepseek2025r1}.

These inference-time techniques, however, are expensive: generating $N$ chain-of-thought responses for best-of-$N$ or majority voting multiplies cost linearly, yet many questions are answerable correctly in a single attempt.
In practice, inference budgets are finite.
API costs, GPU availability, latency targets, and throughput requirements impose hard or soft limits on total inference compute, and as agentic AI systems adopt ever-longer multi-step reasoning pipelines~\cite{openai2024o1}, the gap between per-query cost and per-query necessity will only widen.
The standard approach allocates a \emph{uniform} budget to every input, applying the same number of samples regardless of difficulty~\cite{wang2023selfconsistency, brown2024large}.
This is wasteful: easy inputs that a single forward pass can solve still consume the full budget, while hard inputs that would benefit from more compute are starved at the same level.

The heterogeneity of real workloads creates an opportunity: by redistributing compute from easy inputs to hard ones, the same total budget can yield significantly higher aggregate accuracy.
But which inputs should receive more compute?
Figure~\ref{fig:motivation} illustrates the challenge concretely.
We define the \emph{budget} $b$ as the number of independent responses sampled from the LLM, with the final answer determined by majority vote (self-consistency); the cost function $\Cost(b)$ maps each budget level to its resource consumption (e.g., $\Cost(b) = b$ when cost is proportional to the number of samples).
Some questions are trivially correct at $b=1$ (a single response), making additional samples purely wasteful; others exhibit steep accuracy gains as $b$ increases (``responsive''); still others show diminishing returns or remain intractable regardless of budget.
In hindsight, the optimal strategy is clear: assign $b=1$ to easy questions, concentrate high budgets on responsive ones, and avoid wasting resources on intractable ones.
The challenge is to approximate this oracle strategy \emph{before} observing the outcomes.
Realizing this opportunity requires answering a precise question:

\begin{quote}
\emph{Given a finite average compute budget shared across a batch of inputs, how should one allocate inference compute to individual inputs to maximize aggregate performance?}
\end{quote}

We observe that this question has the structure of a \emph{constrained optimization problem}:
\begin{equation}\label{eq:main}
  \max_{\pi}\;\E_{x\sim\cD}\!\big[\Acc(x,\pi(x))\big]
  \qquad
  \text{s.t.}\quad
  \E_{x\sim\cD}\!\big[\Cost(\pi(x))\big] \le B,
\end{equation}

\begin{wrapfigure}{r}{0.43\textwidth}
\vspace{-12pt}
\centering
\includegraphics[width=0.42\textwidth]{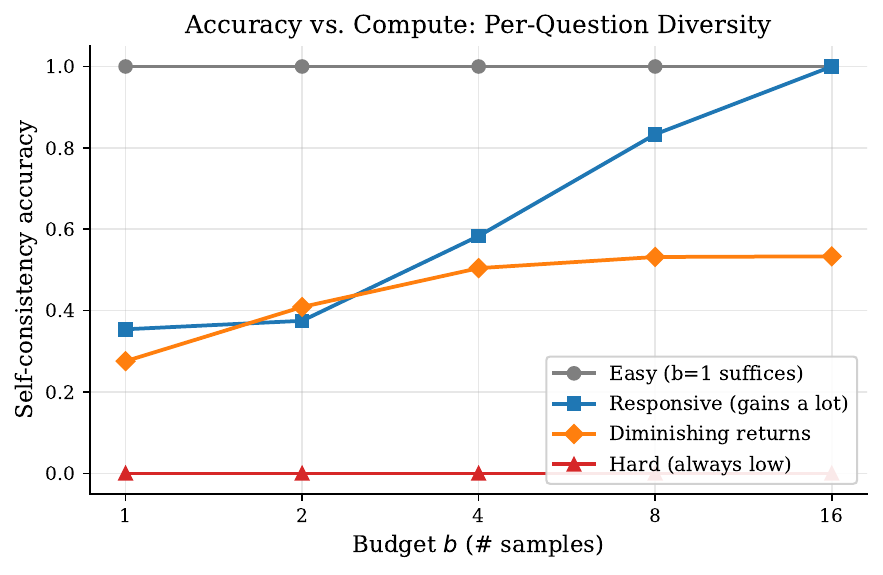}
\caption{Self-consistency accuracy vs.\ budget $b$ (number of sampled responses used for majority voting) for four representative MATH questions (DeepSeek-V3). Different questions exhibit dramatically different scaling behaviour, motivating input-adaptive allocation.}
\label{fig:motivation}
\vspace{-10pt}
\end{wrapfigure}

\noindent where $\pi$ is a policy mapping inputs to compute budgets, $\Acc(x,b)$ is the accuracy achievable on input $x$ with budget $b$, $\Cost(\cdot)$ is a cost function mapping budgets to resource consumption (formally defined in \S\ref{sec:formulation}), and $B$ is the target average cost.
The constraint couples all inputs through a shared resource, so the globally optimal solution cannot be found by solving each input independently.

We propose a two-stage \textsc{Solve-then-Learn} framework.
In the \textbf{solve stage} (\S\ref{sec:oracle}), we attach a Lagrange multiplier $\lambda$ to the budget constraint.
The resulting relaxation decomposes across inputs: each input's sub-problem has a closed-form oracle solution $b^\star(x;\lambda) = \argmax_{b}[\Acc(x,b) - \lambda\,\Cost(b)]$, which optimally balances the marginal accuracy gain against a ``shadow price'' $\lambda$ for compute.
We prove that the aggregate cost under the oracle is monotone in $\lambda$, enabling binary search to match any target budget $B$.
In the \textbf{learn stage} (\S\ref{sec:amortize}), we train a lightweight policy network to predict oracle labels from cheap input features, amortizing the allocation rule for real-time deployment.

This decomposition yields a theoretically clean framework with formal guarantees.
We summarize the contributions of this paper as follows.

\paragraph{Contributions.}
\begin{itemize}[leftmargin=1.5em,itemsep=2pt]
\item We are the first to formalize input-adaptive test-time compute allocation as a constrained optimization problem (\S\ref{sec:formulation}), providing a principled alternative to heuristic rules and enabling systematic study of the accuracy--cost tradeoff under finite inference budgets.
\item We propose a \textsc{Solve-then-Learn} framework (\S\ref{sec:oracle}--\S\ref{sec:amortize}) that decomposes the global constrained problem into per-instance sub-problems via Lagrangian relaxation, then amortizes the oracle allocation into a lightweight classifier for real-time deployment.
\item We establish formal guarantees (\S\ref{sec:theory}): cost monotonicity enabling binary search for budget targeting, an imitation regret bound reducing constrained inference to supervised classification, and a primal recovery result ensuring near-feasibility and near-optimality.
\end{itemize}


\section{Problem Formulation: Budget-Constrained Compute Allocation}\label{sec:formulation}

\noindent\textbf{Setup.}\;
Let $\cX$ denote the input space and $\cD$ the deployment distribution over $\cX$.
We have a discrete set of compute budgets $\cB = \{b_1, \ldots, b_K\}$, ordered by cost: $\Cost(b_1) \le \cdots \le \Cost(b_K)$.
Each budget $b_k$ represents a specific operating mode of the inference engine---e.g., generating $k$ candidate responses and taking the best (best-of-$K$), running self-consistency with $k$ samples, or allowing a chain-of-thought of length proportional to $k$.
For each input $x \in \cX$ and budget $b \in \cB$, we write $\Acc(x,b) \in [0,1]$ for the expected task utility (e.g., probability of producing a correct answer).
A \emph{budget allocation policy} is a measurable function $\pi: \cX \to \cB$ that assigns a compute budget to each input, inducing two aggregate quantities: the expected accuracy $\E[\Acc(x,\pi(x))]$ and the expected cost $\E[\Cost(\pi(x))]$.

Given a target average budget $B > 0$, the design problem is the constrained program
\begin{equation}\label{eq:primal}
  V(B) \;:=\;
  \max_{\pi \in \Pi}\;
  \E_{x\sim\cD}\!\big[\Acc(x,\pi(x))\big]
  \qquad
  \text{s.t.}\quad
  \E_{x\sim\cD}\!\big[\Cost(\pi(x))\big] \le B,
\end{equation}
where $\Pi$ is the class of all measurable maps from $\cX$ to $\cB$ and $V(B)$ denotes the optimal value.
On a finite dataset $\{x_i\}_{i=1}^N$, we work with the empirical version
\begin{equation}\label{eq:emp_primal}
  \hat{V}(B) \;:=\;
  \max_{\pi}\;
  \frac{1}{N}\sum_{i=1}^{N}\Acc\!\big(x_i,\pi(x_i)\big)
  \qquad
  \text{s.t.}\quad
  \frac{1}{N}\sum_{i=1}^{N}\Cost\!\big(\pi(x_i)\big) \le B.
\end{equation}

\noindent\textbf{Why is this hard?}\;
Three features distinguish~\eqref{eq:primal} from textbook optimization.
\emph{(i)~Black-box reward}: $\Acc(x,b)$ has no analytic form and can only be estimated by running the model.
\emph{(ii)~Functional decision variable}: the optimizer is a policy function, not a scalar or vector.
\emph{(iii)~Global coupling}: the budget constraint links all inputs through a shared resource, so independently maximizing per-input accuracy is infeasible whenever $B < \Cost(b_K)$.

\section{The Lagrangian Oracle}\label{sec:oracle}

We now show how Lagrangian duality converts the globally coupled problem~\eqref{eq:primal} into a collection of independent per-instance sub-problems.

\subsection{Dual Relaxation}

Introducing a Lagrange multiplier $\lambda \ge 0$ for the budget constraint, the Lagrangian of~\eqref{eq:emp_primal} is
\begin{equation}\label{eq:lagrangian}
  \cL(\pi, \lambda)
  \;=\;
  \frac{1}{N}\sum_{i=1}^{N}
  \underbrace{\Big[\Acc\!\big(x_i,\pi(x_i)\big)
  - \lambda\,\Cost\!\big(\pi(x_i)\big)\Big]}_{f_\lambda(x_i,\,\pi(x_i))}
  \;+\; \lambda\,B.
\end{equation}
The crucial observation is that the additive structure of~\eqref{eq:lagrangian} \emph{decouples} across inputs: for fixed $\lambda$, maximizing $\cL(\pi,\lambda)$ over $\pi$ reduces to independently maximizing $f_\lambda(x_i, b)$ over $b$ for each input $x_i$.

\begin{definition}[Oracle allocation]\label{def:oracle}
For dual variable $\lambda \ge 0$, the \emph{oracle allocation} for input $x$ is
\begin{equation}\label{eq:oracle_label}
  b^\star(x;\lambda) \;:=\; \argmax_{b \in \cB}\;\big[\Acc(x,b) - \lambda\,\Cost(b)\big].
\end{equation}
\end{definition}

The oracle allocation has a transparent economic interpretation.
The quantity $f_\lambda(x,b) = \Acc(x,b) - \lambda\,\Cost(b)$ is the \emph{net value} of assigning budget $b$ to input $x$, where the dual variable $\lambda$ acts as a \emph{unit price of compute}.
The oracle selects the budget that maximizes net value: it spends more on inputs where the marginal accuracy gain justifies the cost, and economizes on inputs where it does not.
When $\lambda = 0$, compute is free and the oracle assigns the maximum budget to every input.
As $\lambda$ increases, the price of compute rises and the oracle progressively shifts to cheaper budgets, starting with the easiest inputs where additional compute yields the smallest accuracy gain.

\subsection{Budget Targeting via Cost Monotonicity}

The oracle allocation induces an average cost that depends on $\lambda$:
\begin{equation}\label{eq:avg_cost}
  \bar{C}(\lambda)
  \;:=\;
  \frac{1}{N}\sum_{i=1}^{N} \Cost\!\big(b^\star(x_i;\lambda)\big).
\end{equation}
The following result shows that $\bar{C}(\lambda)$ is well-behaved as a function of $\lambda$.

\begin{theorem}[Cost monotonicity]\label{thm:mono}
For finite $\cB$, there exists a deterministic tie-breaking rule under which $\bar{C}(\lambda)$ is non-increasing in $\lambda$.
\end{theorem}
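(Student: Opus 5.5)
The plan is to prove monotonicity one instance at a time. Since $\bar{C}(\lambda)$ in~\eqref{eq:avg_cost} is an average of the per-instance costs $\Cost(b^\star(x_i;\lambda))$, it suffices to show that each map $\lambda \mapsto \Cost(b^\star(x;\lambda))$ is non-increasing. For this I would fix the tie-breaking rule in Definition~\ref{def:oracle} as follows: among the maximizers of $f_\lambda(x,\cdot)$, choose one of smallest cost. If several maximizers share that smallest cost, pick any of them, say the one with the smallest index; the cost value is the same either way.

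The core step is the standard exchange (revealed-preference) argument. Take $0 \le \lambda < \lambda'$ and write $b = b^\star(x;\lambda)$ and $b' = b^\star(x;\lambda')$. Optimality of $b$ at $\lambda$ and of $b'$ at $\lambda'$ gives
\[
\Acc(x,b) - \lambda\Cost(b) \ge \Acc(x,b') - \lambda\Cost(b'),
\qquad
\Acc(x,b') - \lambda'\Cost(b') \ge \Acc(x,b) - \lambda'\Cost(b).
\]
Adding the two inequalities cancels the accuracy terms and leaves $(\lambda'-\lambda)\bigl(\Cost(b) - \Cost(b')\bigr) \ge 0$. Hence $\Cost(b') \le \Cost(b)$, with no assumptions on the shape of $\Acc$ (no concavity is needed).

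The step I expect to need care is ties. The exchange argument holds for any selection of maximizers, because both inequalities use only the optimality of the chosen $b$ and $b'$. So monotonicity of the cost actually holds for every tie-breaking rule. The role of the deterministic rule is only to make $b^\star$, and hence $\bar{C}$, a well-defined function of $\lambda$; the argmax is nonempty because $\cB$ is finite. I would still state the smallest-cost rule explicitly, since it also makes $\bar{C}$ right-continuous in the natural way at breakpoints, which is convenient for the later binary search.

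Finally, averaging the per-instance inequality over $i=1,\dots,N$ gives $\bar{C}(\lambda') \le \bar{C}(\lambda)$, which completes the proof.
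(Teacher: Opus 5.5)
Your proof is correct, but it takes a different route from the paper's.

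\textbf{The paper's route.} The paper argues locally in $\lambda$. For each pair with $\Cost(b) > \Cost(b')$, the difference $f_\lambda(x,b) - f_\lambda(x,b')$ is affine with negative slope, so each input has at most $\binom{K}{2}$ crossover points. Between crossovers the oracle is constant. At a crossover it can only switch to a cheaper budget, and ties are broken toward the cheaper budget. Hence each $\Cost(b^\star(x_i;\lambda))$ is a non-increasing step function, and so is their average.

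\textbf{What your route buys.} Your exchange inequality compares arbitrary $\lambda < \lambda'$ directly, which has several advantages:
\begin{itemize}
\item It needs no crossover bookkeeping.
\item It makes rigorous the paper's rather informal claim that switches only go toward cheaper budgets. That claim is your inequality applied at a $\lambda$ just below and a $\lambda'$ just above a crossover.
\item It shows monotonicity holds for every selection from the argmax, so the tie-breaking rule is needed only to make $\bar{C}$ well defined. This is strictly stronger than the theorem's ``there exists''.
\item It uses finiteness of $\cB$ only to guarantee that a maximizer exists.
\end{itemize}

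\textbf{What the paper's route buys.} The paper's argument additionally yields the explicit staircase structure: $\bar{C}$ is piecewise constant with at most $N\binom{K}{2}$ breakpoints. This explains the steps in the empirical $\bar{C}(\lambda)$ curves, and why a single $\lambda$ may fail to hit $B$ exactly, which motivates the stochastic mixing used in budget targeting. You would need a short extra remark to recover this.

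Your side observation is also correct: with cheapest-cost tie-breaking, each per-instance cost, and hence $\bar{C}$, is right-continuous at the breakpoints.
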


\noindent The proof (in Appendix~\ref{app:proofs}) follows from observing that each pairwise net-value difference $f_\lambda(x,b) - f_\lambda(x,b')$ is affine and decreasing in $\lambda$ whenever $\Cost(b) > \Cost(b')$, so each input's oracle budget can only shift cheaper as $\lambda$ grows.

\paragraph{Practical implication.}

\begin{wrapfigure}{r}{0.46\textwidth}
\vspace{-14pt}
\centering
\includegraphics[width=0.45\textwidth]{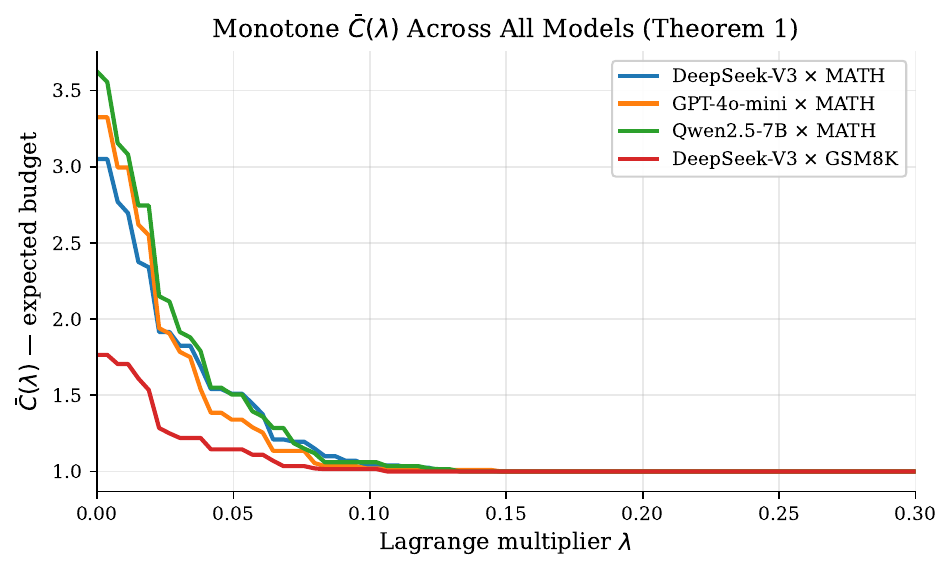}
\caption{Empirical validation of Theorem~\ref{thm:mono}: $\bar{C}(\lambda)$, the average cost under the oracle allocation (Eq.~\ref{eq:avg_cost}), is monotonically non-increasing for all four model--dataset combinations.}
\label{fig:monotone}
\vspace{-8pt}
\end{wrapfigure}

Theorem~\ref{thm:mono} implies that for any target budget $B \in [\Cost(b_1), \Cost(b_K)]$, one can find a dual variable $\lambda_B$ satisfying $\bar{C}(\lambda_B) \approx B$ via binary search.
Since evaluating $\bar{C}(\lambda)$ requires only an $\argmax$ over $K$ options per input, each iteration costs $O(NK)$ and the search converges in $O(\log(1/\varepsilon))$ iterations.
We denote the resulting oracle labels as $y_i^\star := b^\star(x_i;\lambda_B)$.
Figure~\ref{fig:monotone} confirms this property empirically across all four model--dataset combinations: $\bar{C}(\lambda)$ traces a monotonically non-increasing staircase, with the discrete steps reflecting the finite budget set $\cB=\{1,2,4,8,16\}$.

\subsection{Dual Bound and Optimality}\label{sec:dual-bound}

The dual function is $G(\lambda):=\max_\pi\cL(\pi,\lambda)$.
Weak duality gives $\hat{V}(B)\le G(\lambda)$ for all $\lambda\ge 0$.
The tightest upper bound is $G(\lambda_B)$, and the gap $G(\lambda_B)-\hat{V}(B)$ is the \emph{duality gap}.
For the empirical problem~(3), which is a finite program over $N$ independent discrete variables with a single coupling constraint, strong duality holds~\cite{boyd2004convex, bertsekas2015convex}:

\begin{proposition}[Strong duality]\label{prop:strong}
The empirical problem~(3) has zero duality gap: $\hat{V}(B)=\min_{\lambda\ge 0}G(\lambda)$.
\end{proposition}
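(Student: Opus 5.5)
The plan is to prove the two inequalities $\hat V(B)\le\min_{\lambda\ge0}G(\lambda)$ and $\hat V(B)\ge\min_{\lambda\ge0}G(\lambda)$. The first is weak duality. The second I would obtain by exhibiting a deterministic allocation that is feasible for~(3) and satisfies complementary slackness at a minimizing multiplier. Feasibility will come from the oracle of Definition~\ref{def:oracle} combined with Theorem~\ref{thm:mono}. I expect the integrality in the last step to be where the statement can fail without an extra assumption.

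\emph{Step 1 (weak duality and structure of $G$).} For any $\pi$ feasible for~(3) and any $\lambda\ge0$, we have $\cL(\pi,\lambda)\ge \frac1N\sum_i\Acc(x_i,\pi(x_i))$, because $\lambda\bigl(B-\frac1N\sum_i\Cost(\pi(x_i))\bigr)\ge0$. Hence $\hat V(B)\le G(\lambda)$. By the decoupling in~\eqref{eq:lagrangian},
\[
G(\lambda)=\frac1N\sum_{i=1}^N\max_{b\in\cB}\bigl[\Acc(x_i,b)-\lambda\Cost(b)\bigr]+\lambda B .
\]
This is a finite maximum of affine functions of $\lambda$, so $G$ is convex and piecewise linear. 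As $\lambda\to\infty$, $G(\lambda)\to+\infty$ whenever $B>\Cost(b_1)$, so the minimum over $\lambda\ge0$ is attained at some $\lambda^\star$. The boundary case $B\ge\Cost(b_K)$ is immediate: $\lambda^\star=0$, and the all-$b_K$ policy is feasible and attains $G(0)$. I would dispose of it first.

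\emph{Step 2 (optimality condition).} At an interior minimizer $\lambda^\star>0$, the subdifferential of $G$ contains $0$. For each $i$, let $T_i$ be the set of budgets attaining the inner max at $\lambda^\star$. Each inner max contributes the subdifferential $[-\max_{b\in T_i}\Cost(b),\,-\min_{b\in T_i}\Cost(b)]$. Summing these gives
\[
\frac1N\sum_i\min_{b\in T_i}\Cost(b)\;\le\;B\;\le\;\frac1N\sum_i\max_{b\in T_i}\Cost(b).
\]
The two extremes are exactly $\bar C$ evaluated under the two extreme tie-breaking rules, which are the one-sided limits $\bar C(\lambda^\star_\pm)$ of the monotone staircase from Theorem~\ref{thm:mono}.

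\emph{Step 3 (primal recovery).} I would then choose a selection $\pi(x_i)\in T_i$ with average cost exactly $B$. Such a $\pi$ maximizes $\cL(\cdot,\lambda^\star)$ and satisfies $\lambda^\star(B-\text{cost})=0$. It is therefore feasible with value $G(\lambda^\star)$, which closes the gap.

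The search starts from the all-cheapest selection and swaps instances one at a time to their costlier tied budgets. Each swap moves the average cost by a multiple of $1/N$, so the sequence of costs walks monotonically from the lower to the upper end of the interval in Step 2. This is the main obstacle. The walk need not land exactly on $B$, and then complementary slackness fails. What I would try is to stop at the last selection with cost $\le B$ and show that any residual slack is forced to be zero. I do not expect this to hold in general: a knapsack-type integrality gap of order $\max_i(\Cost(b_K)-\Cost(b_1))/N$ can remain. So the proof goes through exactly under the condition that some tie-selection at $\lambda^\star$ meets $B$ with equality. This holds, for instance, when $B$ is chosen on the staircase values of $\bar C$, as in the binary-search procedure, or when $\lambda^\star=0$. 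I would state the proposition under that condition.
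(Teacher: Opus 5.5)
Your diagnosis in Step~3 is correct: the proposition is false as stated, and an explicit instance settles it. Take $N=1$, $\cB=\{b_1,b_2\}$ with $\Cost(b_1)=1$, $\Cost(b_2)=2$, $\Acc(x_1,b_1)=0$, $\Acc(x_1,b_2)=1$, and $B=\tfrac32$. Only $b_1$ is feasible, so $\hat V(B)=0$, whereas
\[
G(\lambda)=\max\{-\lambda,\;1-2\lambda\}+\tfrac32\lambda=\max\{\tfrac12\lambda,\;1-\tfrac12\lambda\}
\]
is minimised at $\lambda^\star=1$ with value $\tfrac12$. Here $T_1=\{b_1,b_2\}$ and neither tie-selection costs $\tfrac32$, which is exactly the failure mode you describe. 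Replicating the instance $N$ times, with $B$ strictly between two consecutive attainable averages, gives a gap of $\tfrac{1}{2N}$. So the gap is genuinely of order $1/N$.

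The paper takes a different route. It relaxes to stochastic policies and correctly obtains $\hat V_{\mathrm{LP}}(B)=\min_{\lambda\ge0}G(\lambda)$ from LP duality, since the inner maximum over each simplex sits at a vertex. It then asserts $\hat V_{\mathrm{LP}}(B)=\hat V(B)$, on the grounds that every vertex of the feasible polytope $\mathcal P$ is integral. That step is wrong. Perturbing a fractional $\mathbf p_{i_0}$ along $\mathbf e_a-\mathbf e_c$ shifts the average cost by $\pm t\,(\Cost(b_a)-\Cost(b_c))/N$, so when the budget constraint is tight one of the two perturbed points is infeasible. Counting tight constraints shows that a vertex of $\mathcal P$ can carry a fractional block: at most one, supported on two budgets of different cost. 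In the example above, $\mathbf p_1=(\tfrac12,\tfrac12)$ is such an optimal vertex. This also undermines the paper's follow-up claim that the binary-search oracle solves~\eqref{eq:emp_primal} exactly.

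What the LP route legitimately gives is twofold. First, it gives zero duality gap for \emph{randomised} allocations, which is what the paper's stochastic mixing actually implements. Second, it gives the bound $0\le\min_{\lambda\ge0}G(\lambda)-\hat V(B)\le 1/N$: round the single fractional block of an optimal vertex down to its cheaper budget. Your subgradient and complementary-slackness route instead gives an exact characterisation. The condition you isolate is necessary as well as sufficient. If $\hat V(B)=G(\lambda^\star)$ with $\lambda^\star>0$ and $\pi$ is optimal for~\eqref{eq:emp_primal}, then
\[
G(\lambda^\star)=\frac{1}{N}\sum_{i}\Acc(x_i,\pi(x_i))\le\cL(\pi,\lambda^\star)\le G(\lambda^\star).
\]
This forces $\pi(x_i)\in T_i$ for every $i$ and average cost exactly $B$.

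Three small repairs are needed.
\begin{enumerate}
\item In the boundary case $B\ge\Cost(b_K)$, the all-$b_K$ policy attains $G(0)=\frac1N\sum_i\max_b\Acc(x_i,b)$ only if $\Acc(x_i,\cdot)$ peaks at $b_K$, and majority-vote accuracy need not do so. Use the per-instance $\argmax_b\Acc(x_i,b)$ policy instead; it is feasible because every policy is.
\item At $\lambda^\star=0$ the relevant condition is cost $\le B$, not equality. It holds automatically for the cheapest tie-selection, because $G'_+(0)=B-\frac1N\sum_i\min_{b\in T_i}\Cost(b)\ge0$, so this case needs no hypothesis.
\item The cost slack left by your walk is below $(\Cost(b)-\Cost(b'))/N$ for the tied pair $b'\to b$ about to be swapped. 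Tied budgets satisfy $\lambda^\star(\Cost(b)-\Cost(b'))=\Acc(x_i,b)-\Acc(x_i,b')$, so the resulting \emph{value} gap is below $1/N$, matching the LP bound. Your order $\max_i(\Cost(b_K)-\Cost(b_1))/N$ is a cost slack, not the value gap, and the steps are multiples of $1/N$ only for integer costs.
\end{enumerate}
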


\noindent
Since the duality gap is zero, the oracle allocation $\{b^\star(x_i;\lambda_B)\}$ obtained via binary search is an \emph{exact} solution to the original constrained problem~(3).
A complete proof---including the LP relaxation construction, weak duality, the subgradient characterisation connecting budget targeting to dual minimisation, and the vertex argument establishing strong duality---is given in Appendix~\ref{app:stochastic}.

\section{Amortized Policy Learning}\label{sec:amortize}

\begin{figure}[t]
\centering
\includegraphics[width=0.95\textwidth]{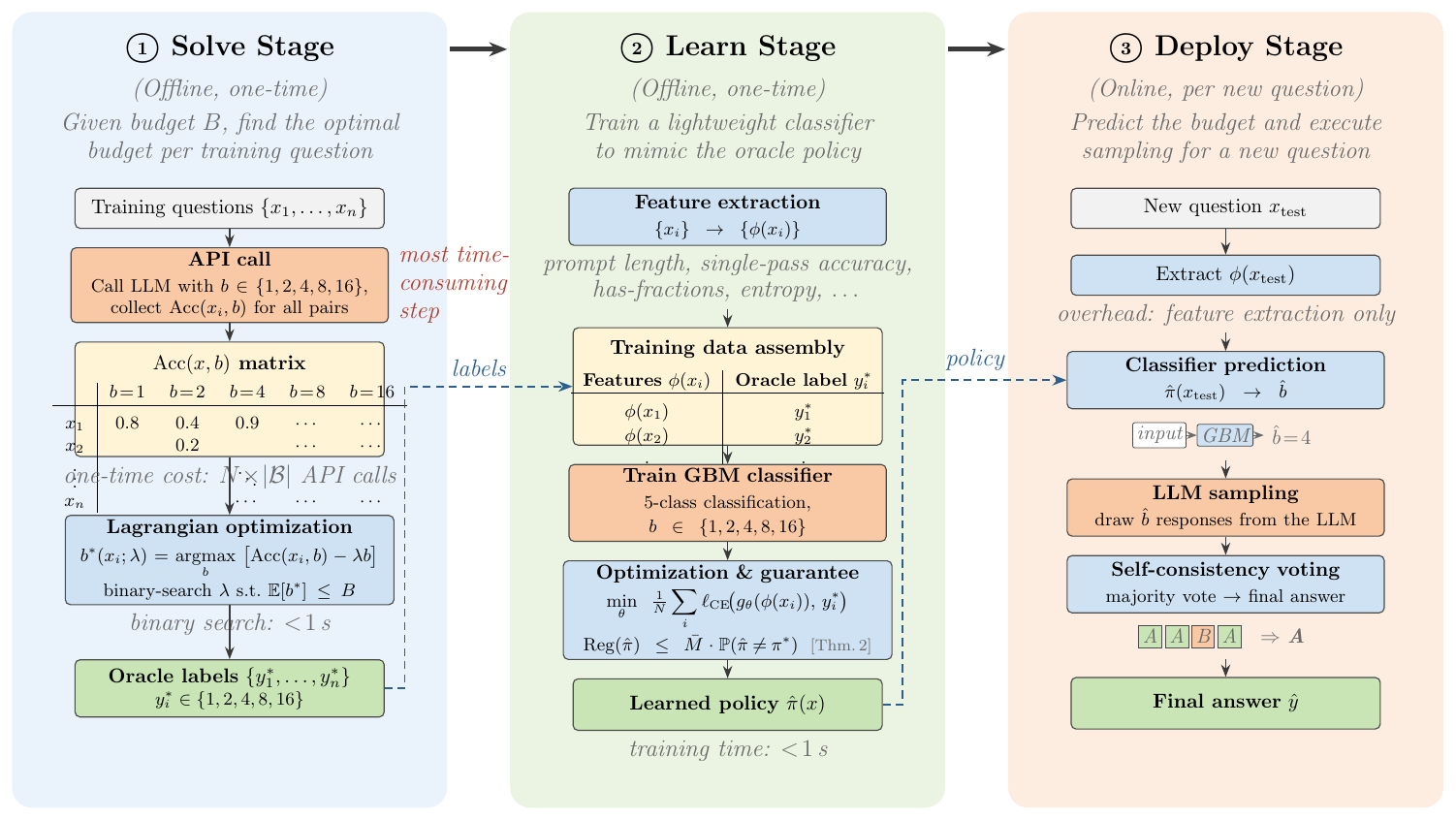}
\caption{Overview of the \textsc{Solve-then-Learn} pipeline. \textbf{Solve stage} (offline): compute the $\Acc(x,b)$ matrix via API calls, then derive oracle labels via Lagrangian optimisation with binary search over $\lambda$. \textbf{Learn stage} (offline): train a lightweight GBM classifier on oracle labels using cheap text features. \textbf{Deploy stage} (online): predict budget for new questions at inference time. The entire training overhead is $<$1s; only the one-time API data collection is expensive.}
\label{fig:pipeline}
\end{figure}

The oracle labels $\{y_i^\star\}$ define an optimal budget assignment on the training set, but computing them for a new input $x$ requires access to $\Acc(x,b)$ for all $b \in \cB$---which is precisely the expensive inference we aim to avoid.
The second stage of our pipeline (Figure~\ref{fig:pipeline}) learns to \emph{predict} oracle labels from cheap input features, amortizing the allocation rule for deployment.

\subsection{Feature Extraction}

Let $\phi(x) \in \R^d$ denote a feature vector computed \emph{before} the expensive inference call.
The framework is agnostic to the choice of features; the only requirement is that $\phi(x)$ be inexpensive relative to full inference and informative about the difficulty of $x$.
Candidates include lexical statistics (input length, vocabulary diversity, number of sub-questions), embedding-based features (prompt embeddings from a frozen encoder), and cheap inference signals (confidence or entropy from a single draft generation).

\subsection{Policy as Classification}

Since $\cB$ is discrete with $|\cB| = K$, the policy is a $K$-way classifier.
Specifically, $\pi_\theta$ parameterizes a function $g_\theta: \R^d \to \R^K$ and predicts
\begin{equation}\label{eq:policy_pred}
  \pi_\theta(x) = b_{\argmax_k\, g_\theta(\phi(x))_k}.
\end{equation}
Training minimizes the cross-entropy loss against oracle labels:
\begin{equation}\label{eq:ce_loss}
  \min_\theta\;
  \frac{1}{N}\sum_{i=1}^{N}
  \ell_{\mathrm{CE}}\!\big(g_\theta(\phi(x_i)),\; y_i^\star\big).
\end{equation}
The classifier can be any standard architecture---logistic regression, gradient-boosted trees, or a small MLP---and trains in seconds to minutes on the pre-computed features.

\subsection{Deployment}

At inference time, a new input $x$ is processed in two steps: (1)~\textbf{Route}: extract features $\phi(x)$ and run $\pi_\theta$ to select budget $\hat{b} = \pi_\theta(x)$; (2)~\textbf{Execute}: run the inference engine once at budget $\hat{b}$.
Since $\pi_\theta$ is a lightweight classifier, routing adds negligible overhead ($<$1\% of the inference cost), making adaptive allocation practical even under strict latency constraints.

\begin{algorithm}[t]
\caption{\textsc{Solve-then-Learn}: Oracle-Supervised Adaptive Budget Allocation}
\label{alg:main}
\begin{algorithmic}[1]
\Require Training inputs $\{x_i\}_{i=1}^N$, budget set $\cB=\{b_1,\ldots,b_K\}$, target budget $B$, feature map $\phi$
\Statex
\Statex \textbf{Stage 1: Solve} \textit{(offline, parallelizable)}
\For{each input $x_i$ and budget $b_k \in \cB$}
    \State Run inference engine; record utility $\Acc(x_i, b_k)$
\EndFor
\State Binary-search for $\lambda_B$ such that $\bar{C}(\lambda_B) \approx B$ \Comment{Theorem~\ref{thm:mono}}
\State Assign oracle labels: $y_i^\star \leftarrow \argmax_{b \in \cB}\big[\Acc(x_i,b) - \lambda_B\,\Cost(b)\big]$ for all $i$
\Statex
\Statex \textbf{Stage 2: Learn} \textit{(fast supervised training)}
\State Extract cheap features $\{\phi(x_i)\}_{i=1}^N$
\State Train classifier $\pi_\theta$ on $\{(\phi(x_i), y_i^\star)\}_{i=1}^N$ via cross-entropy \Comment{Eq.~\eqref{eq:ce_loss}}
\Statex
\Statex \textbf{Deployment} \textit{(per input)}
\State \Return $\pi_\theta(x) = b_{\argmax_k\, g_\theta(\phi(x))_k}$; execute inference at selected budget
\end{algorithmic}
\end{algorithm}

\noindent\textbf{Computational cost.}\;
Stage~1 requires $N \times K$ model evaluations---the dominant cost---but is embarrassingly parallel and performed entirely offline.
The binary search over $\lambda$ operates on the pre-computed utility table and costs $O(NK \log(1/\varepsilon))$.
Stage~2 trains a small classifier and finishes in seconds.
At deployment, only one lightweight classification plus one model call per input is needed.

\subsection{Theoretical Guarantees}\label{sec:theory}

We establish formal guarantees connecting oracle imitation to task-level performance.
All proofs are deferred to Appendix~\ref{app:proofs}.
Define the \emph{penalized value} $J_\lambda(\pi) := \E[\Acc(x,\pi(x)) - \lambda\,\Cost(\pi(x))]$ and the \emph{relaxed regret} $\Reg_\lambda(\hat{\pi}) := J_\lambda(\pi^\star_\lambda) - J_\lambda(\hat{\pi})$.

\begin{theorem}[Imitation regret bound]\label{thm:regret}
Let $\hat{\pi}$ be any policy, $\pi^\star_\lambda$ the oracle at dual variable $\lambda$, and $M_\lambda(x) := \max_{b \in \cB}\, f_\lambda(x, \pi^\star_\lambda(x)) - f_\lambda(x, b)$ the worst-case per-instance gap. Then
\begin{equation}\label{eq:regret_bound_detailed}
  \Reg_\lambda(\hat{\pi})
  \;\le\;
  \E\!\Big[M_\lambda(x) \cdot \indic\!\big\{\hat{\pi}(x) \ne \pi^\star_\lambda(x)\big\}\Big]
  \;\le\;
  \bar{M} \cdot \Prb\!\big(\hat{\pi}(x) \ne \pi^\star_\lambda(x)\big),
\end{equation}
where $\bar{M} := \sup_x M_\lambda(x)$.
\end{theorem}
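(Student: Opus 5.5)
The plan is to write the relaxed regret as an expectation of a pointwise gap and then bound that gap instance by instance. Since $J_\lambda(\pi)=\E[f_\lambda(x,\pi(x))]$ with $f_\lambda(x,b)=\Acc(x,b)-\lambda\,\Cost(b)$, linearity of expectation gives
\[
\Reg_\lambda(\hat{\pi}) \;=\; \E\!\Big[f_\lambda\big(x,\pi^\star_\lambda(x)\big)-f_\lambda\big(x,\hat{\pi}(x)\big)\Big].
\]
Both expectations are finite, because $\Acc\in[0,1]$ and $\Cost$ takes only $K$ values, so this splitting is legitimate. Write $\Delta(x)$ for the integrand; the task is then to bound $\Delta(x)$ for each fixed $x$.

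For a fixed $x$ we split into two cases according to the indicator $\indic\{\hat{\pi}(x)\ne\pi^\star_\lambda(x)\}$.

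On the event $\hat{\pi}(x)=\pi^\star_\lambda(x)$, the two terms coincide, so $\Delta(x)=0=M_\lambda(x)\cdot 0$.

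On the complementary event, $\hat{\pi}(x)$ is some particular element $b\in\cB$. Hence
\[
\Delta(x)=f_\lambda\big(x,\pi^\star_\lambda(x)\big)-f_\lambda(x,b)\le \max_{b'\in\cB}\Big[f_\lambda\big(x,\pi^\star_\lambda(x)\big)-f_\lambda(x,b')\Big]=M_\lambda(x).
\]
Here we read the definition of $M_\lambda$ with the maximum taken over the whole difference, which is the only sensible parsing. By Definition~\ref{def:oracle}, $\pi^\star_\lambda(x)$ maximizes $f_\lambda(x,\cdot)$, so every such difference is nonnegative and $M_\lambda(x)\ge 0$. Combining the two cases gives the pointwise bound
\[
\Delta(x)\le M_\lambda(x)\,\indic\{\hat{\pi}(x)\ne\pi^\star_\lambda(x)\}.
\]
Taking expectations yields the first inequality of the theorem.

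For the second inequality, $M_\lambda(x)\le\bar{M}$ pointwise by definition of the supremum. Multiplying by the nonnegative indicator and taking expectations gives
\[
\E\big[M_\lambda(x)\,\indic\{\hat{\pi}(x)\ne\pi^\star_\lambda(x)\}\big]\le \bar{M}\,\Prb\big(\hat{\pi}(x)\ne\pi^\star_\lambda(x)\big).
\]
The constant $\bar{M}$ is finite: $f_\lambda$ takes values in $[-\lambda\Cost(b_K),\,1]$, so $\bar{M}\le 1+\lambda(\Cost(b_K)-\Cost(b_1))$.

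The argument has no real obstacle; the only care needed is on measurability and ties.

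\emph{Measurability.} $M_\lambda$ is a finite maximum of measurable functions, and $\{\hat{\pi}\ne\pi^\star_\lambda\}$ is a measurable event, provided $\pi^\star_\lambda$ uses a deterministic measurable tie-breaking rule, as in Theorem~\ref{thm:mono}.

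\emph{Ties.} If the argmax is not unique, $\hat{\pi}$ may pick a different maximizer. In that case $\Delta(x)=0\le M_\lambda(x)$, so the bound still holds, though it is loose there.
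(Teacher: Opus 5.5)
Your proposal is correct and takes the same approach as the paper: write the regret as the expectation of the pointwise gap, bound that gap by $0$ when $\hat{\pi}(x)=\pi^\star_\lambda(x)$ and by $M_\lambda(x)$ otherwise, take expectations, and finish with $M_\lambda(x)\le\bar{M}$. Your remarks on measurability, ties and the finiteness of $\bar{M}$ are harmless additions that the paper's proof leaves implicit.
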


\noindent Theorem~\ref{thm:regret} provides a clean reduction: \emph{every percentage point of imitation accuracy translates into a proportional decrease in task-level regret}.
The relaxed regret further decomposes into an accuracy gap and a cost deviation (Corollary~\ref{cor:decomp} in Appendix~\ref{app:proofs}), confirming that when the learned policy's average cost is close to the oracle's, the accuracy gap is controlled entirely by the imitation error.

\noindent The analysis above concerns the relaxed problem.
The following result connects back to the original constrained problem~\eqref{eq:primal}.

\begin{theorem}[Primal recovery]\label{thm:recovery}
Let $\lambda_B$ be chosen so that $\E[\Cost(\pi^\star_{\lambda_B}(x))] \le B$, and let $\varepsilon := \Prb(\hat{\pi}(x) \ne \pi^\star_{\lambda_B}(x))$. Then: (1)~\textbf{Near-feasibility}: $\E[\Cost(\hat{\pi}(x))] \le B + \varepsilon \cdot (\Cost(b_K) - \Cost(b_1))$; (2)~\textbf{Near-optimality}: $V(B') - \E[\Acc(x,\hat{\pi}(x))] \le \bar{M} \cdot \varepsilon$, where $B' = B + \varepsilon \cdot (\Cost(b_K) - \Cost(b_1))$.
\end{theorem}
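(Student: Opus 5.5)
The plan is to prove the two parts separately. Both rest on one pointwise observation: $\hat{\pi}$ and $\pi^\star_{\lambda_B}$ can differ only on the event $D := \{x : \hat{\pi}(x) \ne \pi^\star_{\lambda_B}(x)\}$, and $\Prb(D) = \varepsilon$.

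\emph{Near-feasibility} should be routine. Costs are ordered, $\Cost(b_1) \le \cdots \le \Cost(b_K)$, so for every $x$
\[
\Cost(\hat{\pi}(x)) - \Cost(\pi^\star_{\lambda_B}(x)) \le \big(\Cost(b_K) - \Cost(b_1)\big)\,\indic\{x \in D\}.
\]
Taking expectations gives $\E[\Cost(\hat{\pi})] \le \E[\Cost(\pi^\star_{\lambda_B})] + \varepsilon\,(\Cost(b_K)-\Cost(b_1))$. The hypothesis $\E[\Cost(\pi^\star_{\lambda_B})] \le B$ then yields the bound with $B'$ on the right.

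For \emph{near-optimality} I would use weak duality at level $B'$, in the population analogue of Section~\ref{sec:dual-bound}. For any $\pi$ with $\E[\Cost(\pi)] \le B'$ and any $\lambda \ge 0$, we have $\E[\Acc(\pi)] \le J_{\lambda}(\pi) + \lambda B' \le J_{\lambda}(\pi^\star_{\lambda}) + \lambda B'$. The second inequality holds because $\pi^\star_\lambda$ maximizes $f_\lambda$ pointwise. Taking $\lambda = \lambda_B$ gives $V(B') \le J_{\lambda_B}(\pi^\star_{\lambda_B}) + \lambda_B B'$. Next, I would invoke Theorem~\ref{thm:regret}, which gives $J_{\lambda_B}(\pi^\star_{\lambda_B}) \le J_{\lambda_B}(\hat{\pi}) + \bar{M}\varepsilon$. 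Expanding $J_{\lambda_B}(\hat{\pi}) = \E[\Acc(\hat{\pi})] - \lambda_B \E[\Cost(\hat{\pi})]$ and combining the two inequalities gives
\[
V(B') - \E[\Acc(x,\hat{\pi}(x))] \le \bar{M}\varepsilon + \lambda_B\big(B' - \E[\Cost(\hat{\pi}(x))]\big).
\]

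The main obstacle is the residual slack term $\lambda_B(B' - \E[\Cost(\hat{\pi})])$. Part (1) shows that $\hat{\pi}$ spends at most $B'$, but this term needs the opposite inequality. Part (1) alone does not make it nonpositive: if $\hat{\pi}$ systematically under-spends, the term is positive.

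I would try three things in order to eliminate it. First, use $\E[\Cost(\hat{\pi})] \ge \E[\Cost(\pi^\star_{\lambda_B})] - \varepsilon(\Cost(b_K)-\Cost(b_1))$, which follows from the same pointwise argument applied in the other direction. Combined with exact budget targeting $\E[\Cost(\pi^\star_{\lambda_B})] = B$ from Theorem~\ref{thm:mono}, this bounds the slack by $2\lambda_B\varepsilon(\Cost(b_K)-\Cost(b_1))$, which is still of order $\varepsilon$. Second, I would try to absorb this extra term into $\bar{M}$. Since $\Acc\in[0,1]$, any budget with net value within range of the optimum satisfies $\lambda_B(\Cost(b)-\Cost(b_1)) \le 1$, which relates $\lambda_B \Delta\Cost$ to the per-instance gaps. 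Third, if neither closes the gap, I would state part (2) with the explicit additional $O(\lambda_B\varepsilon)$ term, or under the mild condition that $\hat{\pi}$ spends at least the oracle's budget. I expect this slack-term control to be the only genuinely delicate step; everything else is linearity of expectation plus Theorem~\ref{thm:regret}.
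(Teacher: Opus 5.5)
Your part (1) is the paper's argument. Your part (2) follows the paper's route (weak duality plus Theorem~\ref{thm:regret}), and your inequality $V(B')-\E[\Acc(x,\hat{\pi}(x))]\le\bar{M}\varepsilon+\lambda_B\bigl(B'-\E[\Cost(\hat{\pi}(x))]\bigr)$ is correct. The slack term you could not remove is not a defect of your approach: part (2) is false as stated, so no argument can remove it.

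\emph{Counterexample.} Take $\cB=\{b_1,b_2\}$ with $\Cost(b_1)=1$ and $\Cost(b_2)=2$. Let every input have $\Acc(x,b_1)=0$ and $\Acc(x,b_2)=1$, and set $B=2$, $\lambda_B=\tfrac12$. The oracle picks $b_2$ everywhere, at cost exactly $B$. This $\lambda_B$ even minimises the dual, since $G(\lambda)=\max(\lambda,1)$. Also $M_{\lambda_B}\equiv\tfrac12=\bar{M}$. If $\hat{\pi}$ picks $b_1$ on a set of probability $\varepsilon$, then $B'=2+\varepsilon$, $V(B')=1$ and $\E[\Acc(x,\hat{\pi}(x))]=1-\varepsilon$. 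The left side is therefore $\varepsilon=2\bar{M}\varepsilon$. Under the literal hypothesis $\E[\Cost(\pi^\star_{\lambda_B}(x))]\le B$ it is worse: a huge $\lambda_B$ gives $\pi^\star_{\lambda_B}\equiv b_1$, and $\hat{\pi}=\pi^\star_{\lambda_B}$ has $\varepsilon=0$ while $V(B)>\E[\Acc(x,b_1)]$.

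\emph{Where the paper's proof fails.} It gets past exactly the point where you stopped, by two invalid steps. First, it asserts $V(B')\le G(\lambda_B)$ with $G$ built from $B$ rather than $B'$. Second, it discards $\lambda_B\bigl(B-\E[\Cost(\hat{\pi}(x))]\bigr)$ by citing $\E[\Cost(\hat{\pi}(x))]\le B'$, which bounds that term from the wrong side. In the example that term equals $\varepsilon/2$.

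\emph{Your fallbacks.} The first gives a correct modified statement, but exact targeting is an extra assumption. Theorem~\ref{thm:mono} gives monotonicity only, and the staircase $\bar{C}(\lambda)$ can jump over $B$; this is why the paper resorts to stochastic mixing. Without exact targeting, the extra term is $\lambda_B\bigl(B-\E[\Cost(\pi^\star_{\lambda_B}(x))]\bigr)+2\lambda_B\varepsilon\bigl(\Cost(b_K)-\Cost(b_1)\bigr)$.

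The second fallback cannot succeed, by the example above.

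The third is not enough against $V(B')$. Let half the inputs be as above and half have $\Acc(x,b_1)=0$, $\Acc(x,b_2)=0.2$. Take $B=1.5$ and $\lambda_B=\tfrac12$, so the oracle gives $b_2$ to the first half and $b_1$ to the second, at cost exactly $1.5$, with $\bar{M}=\tfrac12$. A $\hat{\pi}$ that swaps labels on mass $\varepsilon/2$ of each half spends exactly the oracle's budget. Yet for $\varepsilon\le\tfrac12$ it has $V(B')-\E[\Acc(x,\hat{\pi}(x))]=(0.5+0.2\varepsilon)-(0.5-0.4\varepsilon)=0.6\,\varepsilon>\bar{M}\varepsilon$.

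\emph{A clean repair.} Run your weak-duality step at the level $\hat{C}:=\E[\Cost(\hat{\pi}(x))]$ instead of $B'$. This gives $V(\hat{C})\le J_{\lambda_B}(\pi^\star_{\lambda_B})+\lambda_B\hat{C}\le\E[\Acc(x,\hat{\pi}(x))]+\bar{M}\varepsilon$ for every $\lambda_B\ge0$. In words, $\hat{\pi}$ is $\bar{M}\varepsilon$-optimal among policies that spend no more than it does, and part (1) gives $\hat{C}\le B'$. If in addition $\hat{C}\ge B$, then monotonicity of $V$ yields $V(B)-\E[\Acc(x,\hat{\pi}(x))]\le\bar{M}\varepsilon$. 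Your third condition supplies $\hat{C}\ge B$ under exact targeting.
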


\noindent As the classifier improves ($\varepsilon \to 0$), both the feasibility violation and the optimality gap vanish.

\section{Experiments}\label{sec:experiments}

We evaluate the \textsc{Solve-then-Learn} framework on mathematical reasoning benchmarks, demonstrating consistent gains over baselines across multiple LLMs and budget levels.

\subsection{Experimental Setup}\label{sec:exp_setup}

\noindent\textbf{Benchmarks.}\;
We use two datasets: \textbf{MATH}~\cite{hendrycks2021math}, a competition-level mathematics benchmark (200 questions stratified across difficulty levels~2--4), and \textbf{GSM8K}~\cite{cobbe2021gsm8k}, a grade-school word problem set (200 questions) serving as an ``easy benchmark'' control where the base model already achieves $>$94\% accuracy.

\noindent\textbf{Backbones.}\;
We test three LLMs to demonstrate generality: \textbf{DeepSeek-V3}~\cite{deepseek2024v3} (\texttt{deepseek-chat}), a strong open-weight reasoning model; \textbf{GPT-4o-mini}~\cite{achiam2023gpt4}, OpenAI's compact model; and \textbf{Qwen2.5-7B}~\cite{yang2024qwen25}, Alibaba's 7B-parameter model. All models are queried with temperature $T=0.7$; we collect 48 responses per question to evaluate all budget levels with non-overlapping windows.
The budget set is $\cB = \{1, 2, 4, 8, 16\}$. For each question $x_i$ and budget $b$, we estimate self-consistency accuracy $\Acc(x_i, b)$ by partitioning 48 responses into $\lfloor 48/b \rfloor$ non-overlapping windows of size $b$, taking majority votes within each window, and averaging.

\noindent\textbf{Policy and baselines.}\;
The classifier is a gradient-boosted machine (GBM, XGBoost~\cite{chen2016xgboost}) trained on 16 lightweight text features: lexical statistics (length, word count, sentence count), structural indicators (question marks, numbers, fractions, multi-step keywords), and a normalised entropy estimate from a single-pass LLM call. Training uses an 80/20 split with 3 random seeds; all results report mean $\pm$ std. Full feature and hyperparameter details are in Appendix~\ref{app:setup_details}.
We compare against: (1)~\textbf{Oracle} (Lagrangian-optimal allocation; upper bound requiring knowledge of $\Acc(x,b)$), (2)~\textbf{Fixed-$b$} (uniform budget: largest $b \in \cB$ with $b \le B$), (3)~\textbf{Random} (random budget matching $\E[b]=B$), and (4)~\textbf{Heuristic} (rank by prompt length as difficulty proxy, assign high budgets to longest prompts).

\subsection{Main Results}\label{sec:main_results}

Table~\ref{tab:main} presents task accuracy and realized cost for all methods across four model--dataset pairs and three budget constraints $B \in \{1.5, 2.0, 3.0\}$.

\begin{table*}[t]
\centering
\small
\caption{Main results: task accuracy and realized cost under budget constraints $B\in\{1.5, 2.0, 3.0\}$, mean$\pm$std over 3 seeds. \textbf{Bold}: best non-oracle result per column.}
\label{tab:main}
\setlength{\tabcolsep}{2.8pt}
\resizebox{\textwidth}{!}{%
\begin{tabular}{lcccccccccccc}
\toprule
& \multicolumn{6}{c}{DeepSeek-V3 $\times$ MATH} & \multicolumn{6}{c}{GPT-4o-mini $\times$ MATH} \\
\cmidrule(lr){2-7} \cmidrule(lr){8-13}
Method & $B\!=\!1.5$ & Cost & $B\!=\!2.0$ & Cost & $B\!=\!3.0$ & Cost & $B\!=\!1.5$ & Cost & $B\!=\!2.0$ & Cost & $B\!=\!3.0$ & Cost \\
\midrule
Fixed-$b$ & 0.518 & 1.00 & 0.517 & 2.00 & 0.517 & 2.00 & 0.474 & 1.00 & 0.474 & 2.00 & 0.474 & 2.00 \\
Random & 0.519 & 1.49 & 0.517 & 2.00 & 0.534 & 2.98 & 0.475 & 1.49 & 0.474 & 2.00 & 0.487 & 2.98 \\
Heuristic & 0.521 & 1.52 & 0.525 & 2.05 & 0.529 & 3.02 & 0.479 & 1.52 & 0.480 & 2.05 & 0.485 & 3.02 \\
\midrule
\textbf{Ours} & \textbf{0.550{\tiny$\pm$.001}} & 1.45 & \textbf{0.562{\tiny$\pm$.000}} & 1.87 & \textbf{0.575{\tiny$\pm$.001}} & 2.73 & \textbf{0.501{\tiny$\pm$.001}} & 1.47 & \textbf{0.512{\tiny$\pm$.001}} & 1.87 & \textbf{0.526{\tiny$\pm$.001}} & 2.85 \\
Oracle & 0.555 & 1.51 & 0.571 & 1.95 & 0.586 & 3.02 & 0.507 & 1.54 & 0.520 & 1.98 & 0.538 & 3.02 \\
\midrule
& \multicolumn{6}{c}{Qwen2.5-7B $\times$ MATH} & \multicolumn{6}{c}{DeepSeek-V3 $\times$ GSM8K} \\
\cmidrule(lr){2-7} \cmidrule(lr){8-13}
Method & $B\!=\!1.5$ & Cost & $B\!=\!2.0$ & Cost & $B\!=\!3.0$ & Cost & $B\!=\!1.5$ & Cost & $B\!=\!2.0$ & Cost & $B\!=\!3.0$ & Cost \\
\midrule
Fixed-$b$ & 0.495 & 1.00 & 0.491 & 2.00 & 0.491 & 2.00 & 0.946 & 1.00 & 0.946 & 2.00 & 0.946 & 2.00 \\
Random & 0.493 & 1.49 & 0.491 & 2.00 & 0.515 & 2.98 & 0.946 & 1.49 & 0.946 & 2.00 & 0.952 & 2.98 \\
Heuristic & 0.500 & 1.52 & 0.500 & 2.05 & 0.503 & 3.02 & 0.955 & 1.52 & 0.958 & 2.05 & 0.962 & 3.02 \\
\midrule
\textbf{Ours} & \textbf{0.525{\tiny$\pm$.001}} & 1.42 & \textbf{0.541{\tiny$\pm$.001}} & 1.89 & \textbf{0.554{\tiny$\pm$.001}} & 2.58 & \textbf{0.962{\tiny$\pm$.002}} & 1.47 & \textbf{0.965{\tiny$\pm$.001}} & 1.68 & \textbf{0.965{\tiny$\pm$.001}} & 1.68 \\
Oracle & 0.531 & 1.50 & 0.551 & 2.02 & 0.568 & 2.83 & 0.966 & 1.53 & 0.969 & 1.76 & 0.969 & 1.76 \\
\bottomrule
\end{tabular}}
\end{table*}

Our method (\textbf{AdaCompute-GBM}) consistently outperforms all non-oracle baselines across every model, dataset, and budget level.
On MATH, improvements over Fixed-$b$ reach +5.8pp (DeepSeek-V3), +5.2pp (GPT-4o-mini), and +6.4pp (Qwen2.5-7B) at $B=3.0$.
Even on GSM8K with 94.6\% base accuracy, adaptive allocation yields +1.9pp.
The learned policy often uses \emph{less} than the full budget (e.g., cost 2.73 at $B=3.0$ for DeepSeek $\times$ MATH), confirming efficient identification of questions that benefit from additional samples.
The Heuristic baseline provides only modest gains (+0.3--1.2pp), confirming that allocation requires a learned non-linear mapping.

\subsection{Compute--Accuracy Pareto Frontier}\label{sec:pareto}

\begin{figure}[t]
\centering
\begin{subfigure}[b]{0.48\textwidth}
  \includegraphics[width=\textwidth]{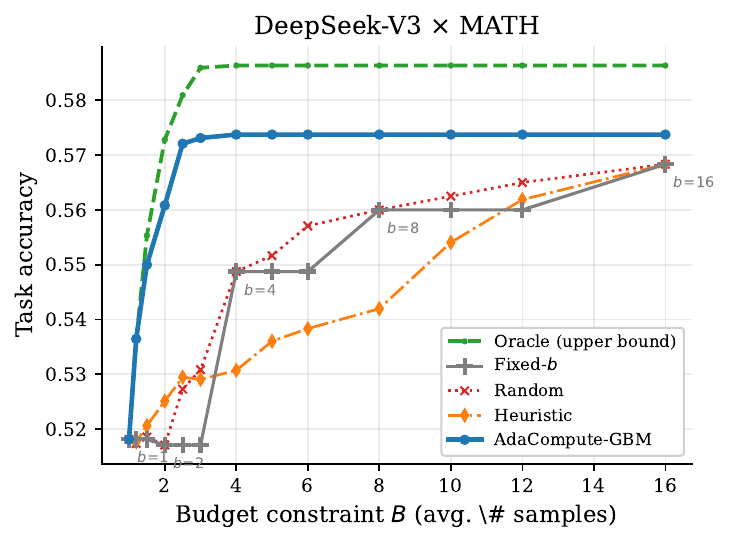}
  \caption{DeepSeek-V3 $\times$ MATH}
\end{subfigure}
\hfill
\begin{subfigure}[b]{0.48\textwidth}
  \includegraphics[width=\textwidth]{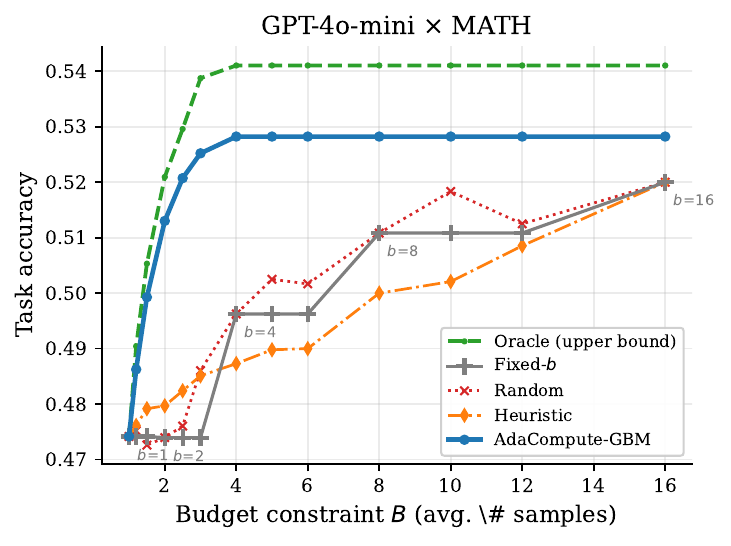}
  \caption{GPT-4o-mini $\times$ MATH}
\end{subfigure}
\\[4pt]
\begin{subfigure}[b]{0.48\textwidth}
  \includegraphics[width=\textwidth]{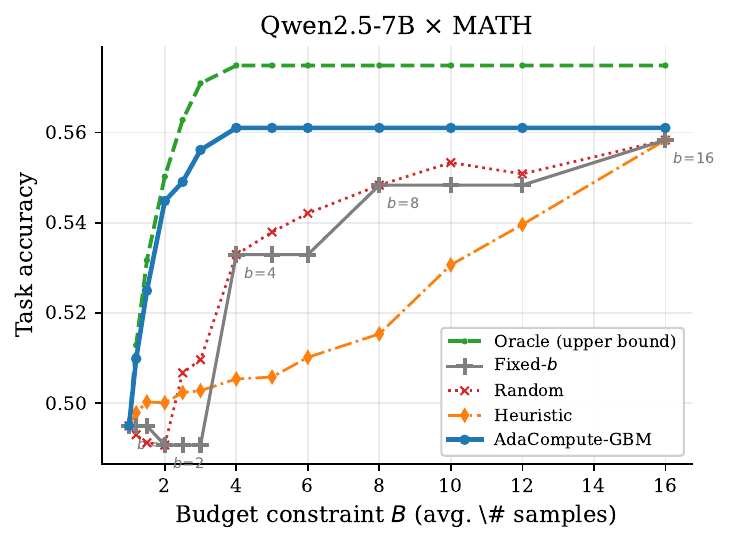}
  \caption{Qwen2.5-7B $\times$ MATH}
\end{subfigure}
\hfill
\begin{subfigure}[b]{0.48\textwidth}
  \includegraphics[width=\textwidth]{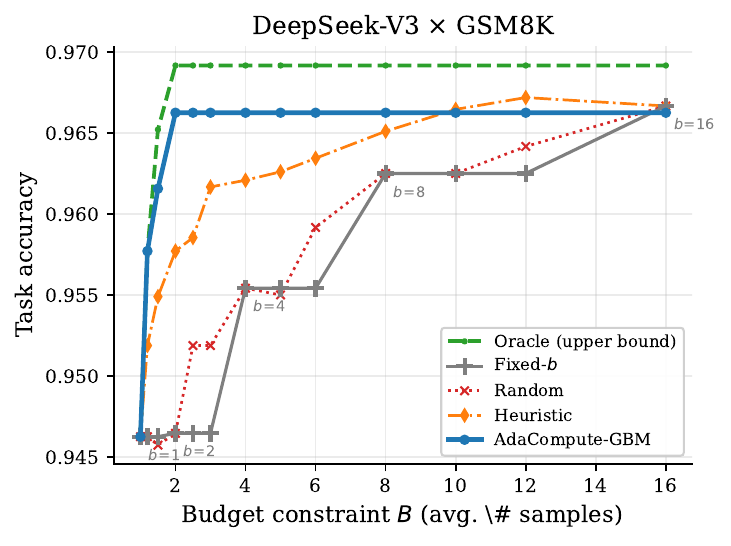}
  \caption{DeepSeek-V3 $\times$ GSM8K}
\end{subfigure}
\caption{Compute--accuracy Pareto frontiers across four model--dataset pairs. Our method (blue) closely tracks the Oracle upper bound (green dashed) and consistently dominates Fixed-$b$ (grey), Random (red), and Heuristic (orange) at every budget level. The largest relative gains occur at intermediate budgets ($B=2$--$6$) where routing flexibility matters most.}
\label{fig:pareto}
\end{figure}

Figure~\ref{fig:pareto} shows the full Pareto frontier across a dense grid of budget constraints $B \in [1, 16]$.
Several patterns emerge: (i)~our method dominates all baselines at every budget level, (ii)~the largest relative gains occur at intermediate budgets ($B=2$--$6$) where the flexibility to differentially route questions matters most, (iii)~beyond $B \approx 3$, all methods plateau because additional budget cannot help easy or intractable questions, and (iv)~Fixed-$b$ forms a staircase because it can only operate at budget levels in $\cB$.

\subsection{Oracle Gap and Allocation Analysis}\label{sec:oracle_analysis}

Across all model--dataset--budget combinations, the learned GBM policy achieves 91--99\% imitation accuracy with task-accuracy gaps of only 0.4--1.4pp (full breakdowns in Appendix~\ref{app:extended_results}, Table~\ref{tab:oracle_gap}).
The gap is small even when imitation is imperfect because misclassifications often assign a ``nearby'' budget with similar accuracy, consistent with the $M_\lambda$-weighted bound of Theorem~\ref{thm:regret}.

\subsection{Ablation: Classifier Architecture}\label{sec:ablation_main}

\begin{table}[t]
\centering
\small
\caption{Classifier architecture ablation (pooled data, 10 random 80/20 splits).}
\label{tab:ablation_classifier_main}
\begin{tabular}{lcccc}
\toprule
Classifier & Task Acc & Oracle Gap & Imit.\ Acc & Avg.\ Cost \\
\midrule
Logistic Reg. & 0.478{\tiny$\pm$.021} & 0.069{\tiny$\pm$.021} & 0.749{\tiny$\pm$.006} & 1.03{\tiny$\pm$.03} \\
SVM (RBF) & 0.478{\tiny$\pm$.021} & 0.070{\tiny$\pm$.021} & 0.753{\tiny$\pm$.000} & 1.00{\tiny$\pm$.00} \\
Rand.\ Forest & \textbf{0.529}{\tiny$\pm$.021} & 0.018{\tiny$\pm$.021} & 0.821{\tiny$\pm$.013} & 2.16{\tiny$\pm$.15} \\
MLP (64) & 0.480{\tiny$\pm$.021} & 0.067{\tiny$\pm$.021} & 0.746{\tiny$\pm$.007} & 1.11{\tiny$\pm$.11} \\
GBM & \textbf{0.529}{\tiny$\pm$.021} & 0.019{\tiny$\pm$.021} & 0.822{\tiny$\pm$.013} & 2.17{\tiny$\pm$.11} \\
\bottomrule
\end{tabular}
\end{table}

Table~\ref{tab:ablation_classifier_main} compares five classifier architectures on pooled data (3 models $\times$ MATH $\times$ $B \in \{1.5, 2.0, 3.0\}$, 1800 instances, 10 random splits).
Tree-based classifiers (GBM, Random Forest) achieve $\sim$5pp higher task accuracy and $\sim$4$\times$ smaller oracle gap than linear models and MLP.
A revealing diagnostic is the average realised cost: linear models default to $b=1$ for nearly every question (cost $\approx$1.0), failing to learn non-linear allocation boundaries, whereas tree-based models match the oracle's cost distribution (cost $\approx$2.17).
We adopt GBM as the default.
Additional ablations on data efficiency, budget set granularity, and feature importance are provided in Appendix~\ref{app:ablation}.

\subsection{Allocation Pattern Analysis}\label{sec:allocation_patterns}

\begin{figure}[t]
\centering
\includegraphics[width=0.92\textwidth]{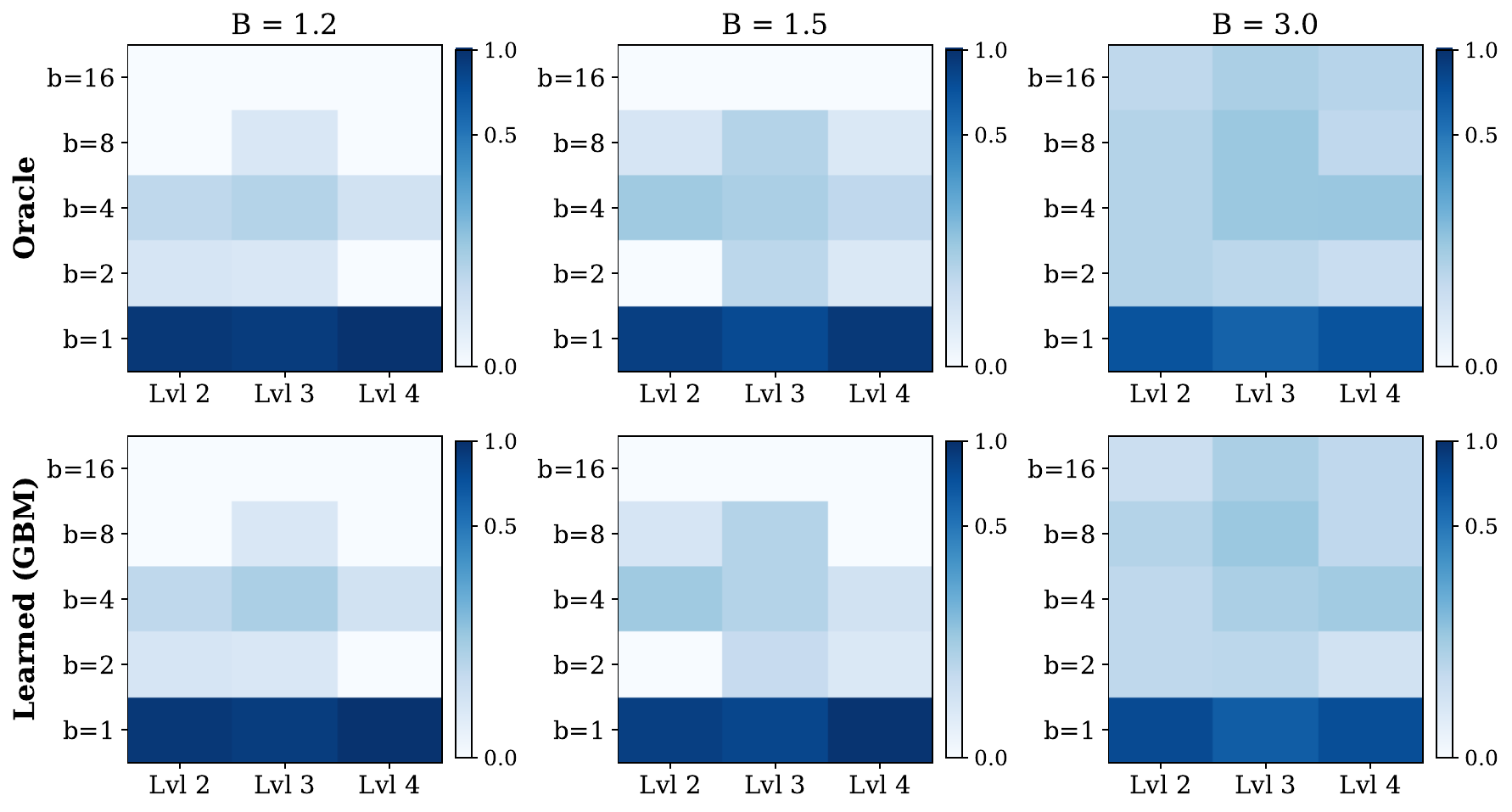}
\caption{Budget allocation heatmap: fraction of questions at each (difficulty, budget) cell for oracle labels (top) and learned GBM labels (bottom) at $B \in \{1.2, 1.5, 3.0\}$ (DeepSeek-V3 $\times$ MATH). The learned policy faithfully replicates the oracle's difficulty-routing structure.}
\label{fig:heatmap}
\end{figure}

Figure~\ref{fig:heatmap} visualises oracle and learned allocations across MATH difficulty levels.
The oracle exhibits an \emph{inverted-U pattern}: $b=1$ for both easy and intractable questions, with high budgets concentrated on medium-difficulty questions where additional samples help most.
The learned GBM policy faithfully replicates this structure.

\section{Discussion and Conclusion}\label{sec:discussion}

Existing approaches to adaptive inference allocation rely on hand-designed difficulty metrics or threshold-based rules.
Our framework replaces these with a principled constrained optimization: the oracle labels arise from a Lagrangian relaxation of a well-defined primal problem, and the dual variable $\lambda$ is the \emph{single knob} that controls the accuracy--cost tradeoff.
Our experiments confirm that this principled approach consistently outperforms heuristic baselines (Table~\ref{tab:main}).
The framework is agnostic to the feature representation $\phi(x)$ and applies to any setting where performance improves with test-time compute and inputs vary in difficulty.
Our ablation (Appendix~\ref{app:ablation}) confirms that tree-based classifiers significantly outperform linear models, suggesting that allocation boundaries are inherently non-linear.

\paragraph{Impact.}
As LLM inference becomes a dominant cost in production systems, principled compute allocation offers a practical lever for reducing waste without sacrificing quality.
Our framework can be integrated into any API-served or self-hosted LLM pipeline, enabling providers to serve more queries under fixed GPU budgets and reducing the carbon footprint of large-scale inference.

\paragraph{Limitations and future work.}
The current framework treats budgets as discrete and requires pre-computing the utility table offline.
Extending to continuous budget spaces, online oracle adaptation, and multi-objective settings (e.g., jointly constraining latency and throughput) are promising directions.
The gap between the learned policy and the oracle (Table~\ref{tab:oracle_gap}) highlights the need for richer, yet still cheap, difficulty signals.

\paragraph{Conclusion.}
We presented a principled framework for adaptive test-time compute allocation that reduces the globally coupled budget-constrained program to supervised classification via Lagrangian duality.
The framework enjoys formal guarantees and experiments across three LLMs demonstrate consistent 3--6pp improvements over uniform allocation, tracking the oracle upper bound with over 91\% imitation accuracy.


\appendix

\section{Related Work}\label{app:related}

\paragraph{Test-time scaling for LLMs.}
A growing body of work demonstrates that inference-time computation can significantly improve LLM performance.
Chain-of-thought prompting~\cite{wei2022chain, kojima2022large} encourages step-by-step reasoning; self-consistency~\cite{wang2023selfconsistency} aggregates multiple reasoning paths; tree search~\cite{yao2024tree, hao2023reasoning, zhang2024accessing} explores candidate trajectories; and process reward models~\cite{lightman2024lets, uesato2022solving} provide step-level verification.
Repeated sampling alone can substantially boost solve rates~\cite{brown2024large}, and step-aware verifiers further improve multi-step reasoning~\cite{li2023stepaware}.
Snell et~al.~\cite{snell2025scaling} show that optimally allocated test-time compute can outperform parameter scaling; Wu et~al.~\cite{wu2025inference} study inference scaling laws across strategies; and Liu et~al.~\cite{liu2025can} demonstrate that small models with scaled test-time compute can match much larger ones.
Agarwal et~al.~\cite{agarwal2025art} study how strategy varies with difficulty, and Zhang et~al.~\cite{zhang2025survey} provide a comprehensive survey.
Jones~\cite{jones2021scaling} earlier studied train-time vs.\ test-time tradeoffs in board games.
These works address \emph{which} test-time strategies to use; we address the orthogonal question of \emph{how much} compute to allocate to each input.

\paragraph{Adaptive computation.}
The idea of adapting computation to input difficulty predates LLMs.
Graves~\cite{graves2016adaptive} introduced adaptive computation time for RNNs; early-exit networks~\cite{teerapittayanon2016branchynet, huang2018multi} and dynamic neural networks~\cite{han2021dynamic} adapt depth or width at inference; mixture-of-experts~\cite{shazeer2017outrageously, fedus2022switch} activate subsets of parameters per input, building on the foundational idea of adaptive expert mixtures~\cite{jacobs1991adaptive}; conditional computation~\cite{bengio2016conditional} and speculative decoding~\cite{leviathan2023fast} further reduce unnecessary computation.
In the LLM setting, router-based systems dispatch to models of varying capability~\cite{ong2024routellm, panda2025adaptive, ding2024hybrid}, with benchmarks emerging to evaluate them~\cite{hu2024routerbench}.
Recent work predicts per-query token budgets~\cite{aggarwal2025selfbudgeter} or jointly routes models and reasoning strategies~\cite{pan2025route}.
Our framework provides a principled optimization substrate for these heuristic allocation rules.

\paragraph{Input-adaptive LLM scaling.}
Most closely related, Damani et~al.~\cite{damani2025learning} train a marginal reward predictor to allocate best-of-$K$ samples in a batch-online setting; Hassid et~al.~\cite{hassid2024larger} reallocate generation budgets for code tasks.
Chen et~al.~\cite{chen2023frugalgpt} propose FrugalGPT, which cascades LLMs to reduce cost, but focuses on model selection rather than per-input compute allocation within a single model.
Madaan et~al.~\cite{madaan2024selfrefine} enable iterative self-refinement but do not budget the number of refinement rounds.
Our approach differs structurally: we solve a constrained program via Lagrangian decomposition, produce explicit oracle supervision, and provide formal regret bounds linking classification error to task performance.

\paragraph{Budgeted and selective prediction.}
Our constrained formulation connects to budgeted classification~\cite{trapeznikov2013supervised}, selective prediction~\cite{geifman2017selective}, and bandits with knapsacks~\cite{badanidiyuru2018bandits, agrawal2016linear}.
The key distinction is that our ``reward'' is a complex black-box outcome of deep generative inference, motivating offline oracle construction from pre-recorded evaluations rather than online exploration.

\section{Proofs}\label{app:proofs}

\subsection{Proof of Theorem~\ref{thm:mono}}

\begin{proof}
Fix an input $x$ and consider two budgets $b, b' \in \cB$ with $c := \Cost(b) > c' := \Cost(b')$.
The difference in penalized utility is
\[
  f_\lambda(x,b) - f_\lambda(x,b')
  = \big[\Acc(x,b) - \Acc(x,b')\big] - \lambda(c - c').
\]
This is a linear function of $\lambda$ with negative slope $-(c - c') < 0$.
Define the \emph{crossover point}
\[
  \lambda^\star(x, b, b') := \frac{\Acc(x,b) - \Acc(x,b')}{c - c'}.
\]
For $\lambda < \lambda^\star$, the expensive option $b$ is preferred; for $\lambda > \lambda^\star$, the cheap option $b'$ dominates.

Now consider the oracle $b^\star(x;\lambda) = \argmax_{b \in \cB} f_\lambda(x,b)$.
Since each pairwise comparison has a unique crossover, the set of $\lambda$ values at which the oracle output changes is finite (at most $\binom{K}{2}$ crossover points per input).
Between consecutive crossover points, the oracle is constant.
At each crossover, the oracle can only switch from a more expensive to a less expensive budget (since the expensive option's advantage is eroded).

To handle ties deterministically, we adopt the convention that ties are broken in favor of the \emph{cheaper} budget.
Under this rule, $\Cost(b^\star(x;\lambda))$ is a non-increasing step function of $\lambda$ for each $x$.
Since $\bar{C}(\lambda) = \frac{1}{N}\sum_i \Cost(b^\star(x_i;\lambda))$ is an average of non-increasing step functions, it is itself non-increasing.
\end{proof}

\subsection{Proof of Theorem~\ref{thm:regret}}

\begin{proof}
The relaxed regret is
\[
  \Reg_\lambda(\hat{\pi})
  = J_\lambda(\pi^\star_\lambda) - J_\lambda(\hat{\pi})
  = \E\!\big[f_\lambda(x,\pi^\star_\lambda(x)) - f_\lambda(x,\hat{\pi}(x))\big].
\]
Define the pointwise gap $\Delta(x) := f_\lambda(x,\pi^\star_\lambda(x)) - f_\lambda(x,\hat{\pi}(x))$.

\textbf{Case 1}: $\hat{\pi}(x) = \pi^\star_\lambda(x)$.
Then $\Delta(x) = 0$.

\textbf{Case 2}: $\hat{\pi}(x) \ne \pi^\star_\lambda(x)$.
Since $\pi^\star_\lambda(x)$ maximizes $f_\lambda(x,\cdot)$ and $\hat{\pi}(x)$ is some other action,
\[
  \Delta(x)
  = f_\lambda(x,\pi^\star_\lambda(x)) - f_\lambda(x,\hat{\pi}(x))
  \le f_\lambda(x,\pi^\star_\lambda(x)) - \min_{b \in \cB} f_\lambda(x,b)
  = M_\lambda(x).
\]

Combining both cases: $\Delta(x) \le M_\lambda(x) \cdot \indic\{\hat{\pi}(x) \ne \pi^\star_\lambda(x)\}$.
Taking expectations yields~\eqref{eq:regret_bound_detailed}.
The uniform bound follows from $M_\lambda(x) \le \bar{M}$.
\end{proof}

\subsection{Utility--Cost Decomposition}

\begin{corollary}[Utility--cost decomposition]\label{cor:decomp}
The relaxed regret decomposes as
\[
  \Reg_\lambda(\hat{\pi})
  \;=\;
  \underbrace{\E\!\big[\Acc(x,\pi^\star_\lambda(x)) - \Acc(x,\hat{\pi}(x))\big]}_{\text{accuracy gap}\;\Delta_{\mathrm{acc}}}
  \;-\;
  \lambda\,\underbrace{\E\!\big[\Cost(\pi^\star_\lambda(x)) - \Cost(\hat{\pi}(x))\big]}_{\text{cost deviation}\;\Delta_{\mathrm{cost}}}.
\]
Rearranging, the accuracy gap satisfies $\Delta_{\mathrm{acc}} = \Reg_\lambda(\hat{\pi}) + \lambda\,\Delta_{\mathrm{cost}}$.
\end{corollary}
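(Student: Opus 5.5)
This identity follows directly from the definitions, so the plan is simply to expand the relaxed regret and regroup terms.

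First, write $\Reg_\lambda(\hat{\pi}) = J_\lambda(\pi^\star_\lambda) - J_\lambda(\hat{\pi})$ and substitute the definition of the penalized value:
\[
J_\lambda(\pi) = \E[\Acc(x,\pi(x))] - \lambda\,\E[\Cost(\pi(x))].
\]
Here I use linearity of expectation. Since $\Acc\in[0,1]$ and $\Cost$ takes only the $K$ finite values $\Cost(b_1),\ldots,\Cost(b_K)$, both expectations are finite for every measurable policy. So splitting $J_\lambda$ into two expectations is legitimate, and no $\infty-\infty$ issue arises.

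Next, subtract the two expressions and group the accuracy terms and the cost terms separately:
\[
\Reg_\lambda(\hat{\pi}) = \E\big[\Acc(x,\pi^\star_\lambda(x)) - \Acc(x,\hat{\pi}(x))\big] - \lambda\,\E\big[\Cost(\pi^\star_\lambda(x)) - \Cost(\hat{\pi}(x))\big].
\]
This is exactly $\Delta_{\mathrm{acc}} - \lambda\,\Delta_{\mathrm{cost}}$. Moving $\lambda\,\Delta_{\mathrm{cost}}$ to the other side then gives $\Delta_{\mathrm{acc}} = \Reg_\lambda(\hat{\pi}) + \lambda\,\Delta_{\mathrm{cost}}$.

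There is no real obstacle. The only point to check is the integrability remark above, which is what justifies separating the expectation of the difference into a difference of expectations.

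Optionally, I would then combine this identity with Theorem~\ref{thm:regret} to get
\[
\Delta_{\mathrm{acc}} \le \bar M\,\Prb(\hat\pi\ne\pi^\star_\lambda) + \lambda\,\Delta_{\mathrm{cost}}.
\]
This makes precise the paper's remark that when the cost deviation is small, the accuracy gap is controlled by the imitation error.
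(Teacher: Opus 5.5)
Your proposal is correct and matches the paper. The paper gives no separate proof for this corollary, since it is exactly this expansion of $J_\lambda$ by linearity of expectation, followed by regrouping the accuracy and cost terms. Your boundedness remark and your combination with Theorem~\ref{thm:regret} are both valid; the latter makes precise the informal remark the paper places right after the corollary.
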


\noindent This reveals that when $\Delta_{\mathrm{cost}} \approx 0$, the accuracy gap is controlled entirely by the imitation error via Theorem~\ref{thm:regret}.

\subsection{Proof of Theorem~\ref{thm:recovery}}

\begin{proof}
\textbf{Near-feasibility.}
Let $\varepsilon = \Prb(\hat{\pi}(x) \ne \pi^\star_{\lambda_B}(x))$.
On the event $\{\hat{\pi}(x) = \pi^\star_{\lambda_B}(x)\}$, the costs are identical.
On the complementary event, the cost difference is at most $\Cost(b_K) - \Cost(b_1)$.
Therefore:
\[
  \E[\Cost(\hat{\pi}(x))]
  \le \E[\Cost(\pi^\star_{\lambda_B}(x))] + \varepsilon \cdot (\Cost(b_K) - \Cost(b_1))
  \le B + \varepsilon \cdot (\Cost(b_K) - \Cost(b_1)) =: B'.
\]

\textbf{Near-optimality.}
By weak duality, $V(B') \le G(\lambda_B)$ (the dual bound holds for any budget level).
By Theorem~\ref{thm:regret},
\[
  J_{\lambda_B}(\pi^\star_{\lambda_B}) - J_{\lambda_B}(\hat{\pi}) \le \bar{M} \cdot \varepsilon.
\]
Since $J_{\lambda_B}(\pi^\star_{\lambda_B}) + \lambda_B B = G(\lambda_B)$, we have
\[
  \E[\Acc(x,\hat{\pi}(x))]
  = J_{\lambda_B}(\hat{\pi}) + \lambda_B \E[\Cost(\hat{\pi}(x))]
  \ge G(\lambda_B) - \lambda_B B - \bar{M}\varepsilon + \lambda_B \E[\Cost(\hat{\pi}(x))].
\]
Using $\E[\Cost(\hat{\pi}(x))] \le B'$ and $V(B') \le G(\lambda_B)$:
\[
  V(B') - \E[\Acc(x,\hat{\pi}(x))]
  \le G(\lambda_B) - \E[\Acc(x,\hat{\pi}(x))]
  \le \bar{M}\varepsilon + \lambda_B(B - \E[\Cost(\hat{\pi}(x))])
  \le \bar{M}\varepsilon. \qedhere
\]
\end{proof}

\section{Stochastic Policy Extension and Proof of Strong Duality}\label{app:stochastic}

\subsection{LP Relaxation via Stochastic Policies}

Replacing each deterministic choice $\pi(x_i)\in\cB$ by a distribution $\mathbf{p}_i=(p_{i1},\dots,p_{iK})\in\Delta^{K-1}$ yields the LP relaxation
\begin{equation}\label{eq:lp}
  \hat V_{\mathrm{LP}}(B)
  \;:=\;
  \max_{\{p_{ik}\}}\;
  \frac{1}{N}\sum_{i=1}^{N}\sum_{k=1}^{K}p_{ik}\,\Acc(x_i,b_k)
  \quad\text{s.t.}\quad
  \frac{1}{N}\sum_{i,k}p_{ik}\,\Cost(b_k)\le B,
  \;\;
  \sum_{k}p_{ik}=1,
  \;\;
  p_{ik}\ge 0.
\end{equation}

\subsection{Weak Duality}

Attaching $\lambda\ge 0$ to the budget constraint, the Lagrangian is
\begin{equation}\label{eq:lag_lp}
  \cL(\{p_{ik}\},\lambda)
  \;=\;
  \frac{1}{N}\sum_{i,k}p_{ik}\bigl[\Acc(x_i,b_k)-\lambda\,\Cost(b_k)\bigr]
  \;+\;\lambda\,B.
\end{equation}
For fixed $\lambda$, the Lagrangian is linear in each $\mathbf{p}_i$, so its maximum over $\Delta^{K-1}$ is attained at a vertex $\mathbf{e}_{k^\star}$ with $k^\star=\arg\max_k[\Acc(x_i,b_k)-\lambda\,\Cost(b_k)]$.
Hence the dual function coincides with the deterministic-policy dual:
\begin{equation}\label{eq:dual}
  G(\lambda)
  \;=\;
  \frac{1}{N}\sum_{i=1}^{N}\max_{b\in\cB}\bigl[\Acc(x_i,b)-\lambda\,\Cost(b)\bigr]
  \;+\;\lambda\,B.
\end{equation}

For any feasible $\pi^*$ (i.e., $\frac{1}{N}\sum_i\Cost(\pi^*(x_i))\le B$) and any $\lambda\ge 0$:
\[
  G(\lambda)
  \;\ge\;
  \frac{1}{N}\sum_i\Acc(x_i,\pi^*(x_i))
  \;-\;\lambda\underbrace{\Bigl(\frac{1}{N}\sum_i\Cost(\pi^*(x_i))-B\Bigr)}_{\le\,0}
  \;\ge\;
  \frac{1}{N}\sum_i\Acc(x_i,\pi^*(x_i)),
\]
so $G(\lambda)\ge\hat V(B)$ for all $\lambda\ge 0$.

\subsection{Subgradient and Budget Targeting}

Each summand in~\eqref{eq:dual} is the pointwise maximum of $K$ affine functions of $\lambda$ with slopes $\{-\Cost(b_k)\}$.
By Danskin's theorem, a subgradient of $G$ is
\begin{equation}\label{eq:subgrad}
  \frac{\partial G}{\partial\lambda}\;=\;B-\bar\Cost(\lambda),
  \qquad
  \bar\Cost(\lambda):=\frac{1}{N}\sum_{i}\Cost\!\bigl(b^\star(x_i;\lambda)\bigr).
\end{equation}
Since $G$ is convex, the minimiser $\lambda^\star$ satisfies $0\in\partial G(\lambda^\star)$, i.e., $\bar\Cost(\lambda^\star)=B$.
Thus binary-searching for $\lambda_B$ with $\bar\Cost(\lambda_B)\approx B$ (Section~4.2) is equivalent to solving $\min_{\lambda\ge 0}G(\lambda)$.

\subsection{Strong Duality -- Proof of Proposition~\ref{prop:strong}}

\begin{proof}
\textbf{Step 1: $\hat V_{\mathrm{LP}}(B)=\hat V(B)$.}\;
The feasible region of~\eqref{eq:lp} is $\mathcal{P}=\bigl(\prod_i\Delta^{K-1}\bigr)\cap\{$budget constraint$\}$.
Any $\{p_{ik}\}\in\mathcal{P}$ with some $\mathbf{p}_{i_0}$ not at a vertex of $\Delta^{K-1}$ can be written as a midpoint of two perturbations along a direction $\mathbf{e}_a-\mathbf{e}_c$ within $\Delta^{K-1}$, so it is not a vertex of $\mathcal{P}$.
Therefore every vertex of $\mathcal{P}$ has $\mathbf{p}_i\in\{\mathbf{e}_1,\dots,\mathbf{e}_K\}$ for all $i$, corresponding to a deterministic policy.
Since a linear objective over a polytope is maximised at a vertex, $\hat V_{\mathrm{LP}}(B)=\hat V(B)$.

\medskip\noindent
\textbf{Step 2: $\hat V_{\mathrm{LP}}(B)=\min_{\lambda\ge 0}G(\lambda)$.}\;
The LP~\eqref{eq:lp} is feasible (e.g., $\pi(x_i)=b_1$ for all $i$) and bounded.
By LP strong duality, $\hat V_{\mathrm{LP}}(B)=\min_{\lambda\ge 0}G_{\mathrm{LP}}(\lambda)$.
As shown in Section~B.2, $G_{\mathrm{LP}}(\lambda)=G(\lambda)$.

\medskip\noindent
Combining: $\hat V(B)=\hat V_{\mathrm{LP}}(B)=\min_{\lambda\ge 0}G(\lambda)$.
\end{proof}

\section{Practical Notes on Utility Estimation}\label{app:utility}

When budget $b$ corresponds to best-of-$K$ sampling, $\Acc(x,b)$ is defined as the probability that the best of $K$ samples is correct.
This is estimated by running $R$ independent trials of $K$-sample generation and recording the fraction of correct outcomes.
Averaging over $R = 8$--$16$ repetitions is typically sufficient for stable oracle labels.
With limited resources, one may use $R=1$ with binary correctness indicators, relying on the law of large numbers across inputs for reliable aggregate performance estimates.


\section{Extended Experimental Setup}\label{app:setup_details}

\subsection{Feature Vector Details}

The 16-dimensional feature vector $\phi(x)$ is computed from the question text without any model inference (except for a single cheap entropy estimate):
(1)~\texttt{prompt\_length\_chars}: character count;
(2)~\texttt{prompt\_length\_words}: word count;
(3)~\texttt{sentence\_count}: number of sentences;
(4)~\texttt{question\_marks}: question mark count;
(5)~\texttt{numbers\_count}: count of numeric values;
(6)~\texttt{number\_magnitude\_avg}: $\log(\text{mean numeric value})$;
(7)~\texttt{number\_magnitude\_max}: $\log(\text{max numeric value})$;
(8)~\texttt{has\_percentage}: presence of \% or ``percent'';
(9)~\texttt{has\_fraction}: presence of fraction notation;
(10)~\texttt{has\_time\_word}: time-related keywords;
(11)~\texttt{has\_money\_word}: monetary keywords;
(12)~\texttt{has\_rate\_word}: rate/speed keywords;
(13)~\texttt{has\_multi\_step\_word}: multi-step problem indicators;
(14)~\texttt{avg\_word\_length}: mean word length;
(15)~\texttt{unique\_word\_ratio}: unique words / total words;
(16)~\texttt{entropy\_estimate}: normalised entropy from a single-pass LLM call.

\subsection{GBM Hyperparameters}

We use XGBoost with 100 estimators, max depth~5, learning rate~0.1, and multi-class log-loss objective. Training uses an 80/20 train/validation split with 3 random seeds. GBM training on $N=200$ instances with 16 features completes in $<$1~second on a standard CPU.

\subsection{Binary Search Details}

The binary search operates on $[\lambda_{\min}, \lambda_{\max}] = [0.0, 5.0]$ with 50 iterations. Since $\bar{C}(\lambda)$ is a staircase function, we use stochastic mixing to achieve exact budget targets: we find two adjacent $\lambda$ values giving $\bar{C}$ above and below $B$, then randomly mix the corresponding label sets so that the expected cost equals $B$.

\subsection{Data Collection Cost}

Each model--dataset pair requires $200 \times 48 = 9{,}600$ API calls (48 responses per question to evaluate all budget levels). The total across all four settings is $4 \times 9{,}600 = 38{,}400$ API calls.

\section{Extended Experimental Results}\label{app:extended_results}

\subsection{Per-Difficulty-Level Accuracy}

Table~\ref{tab:per_level} quantifies accuracy variation across MATH difficulty levels.

\begin{table}[h]
\centering
\caption{Per-difficulty-level accuracy at $b=1$ and $b=16$, and self-consistency (SC) gain.}
\label{tab:per_level}
\begin{tabular}{llccc}
\toprule
Model & Level & Acc($b=1$) & Acc($b=16$) & SC Gain \\
\midrule
DeepSeek-V3 & Level 2 & 0.760 & 0.812 & +0.052 \\
 & Level 3 & 0.498 & 0.570 & +0.072 \\
 & Level 4 & 0.361 & 0.390 & +0.029 \\
\midrule
GPT-4o-mini & Level 2 & 0.752 & 0.806 & +0.055 \\
 & Level 3 & 0.448 & 0.531 & +0.083 \\
 & Level 4 & 0.276 & 0.303 & +0.027 \\
\midrule
Qwen2.5-7B & Level 2 & 0.753 & 0.818 & +0.065 \\
 & Level 3 & 0.461 & 0.546 & +0.085 \\
 & Level 4 & 0.339 & 0.382 & +0.043 \\
\midrule
DeepSeek (GSM8K) & GSM8K & 0.946 & 0.967 & +0.020 \\
\bottomrule
\end{tabular}
\end{table}

The pattern is consistent across all three models: Level~3 (medium-difficulty) questions exhibit the largest self-consistency gains (+7.2--8.5pp from $b=1$ to $b=16$), while Level~2 (easy) and Level~4 (hard) questions benefit much less (+5.2--6.5pp and +2.7--4.3pp, respectively).
This confirms the ``responsive'' category identified in Figure~\ref{fig:motivation} is concentrated at medium difficulty---precisely where the model occasionally generates a correct reasoning chain but not reliably so at $b=1$, and where aggregating multiple samples via majority voting can ``rescue'' the correct answer.

Level~4 questions gain the least from additional samples (+2.7--4.3pp), consistent with the ``hard/intractable'' archetype: when the model almost never produces a correct chain, voting over more incorrect answers provides minimal benefit.
Level~2 questions gain moderately (+5.2--6.5pp), but their high base accuracy (0.75--0.76) means they are already ``mostly solved'' and the marginal value per additional sample is low.
This heterogeneity is exactly what the Lagrangian oracle exploits: it routes the limited budget toward Level~3 questions where the marginal return is highest.

Interestingly, the SC gains are remarkably consistent across models despite substantial differences in absolute accuracy (e.g., GPT-4o-mini has Level~4 accuracy 0.276 vs.\ DeepSeek-V3's 0.361).
This suggests the gain structure is primarily determined by problem difficulty rather than model architecture, supporting the generality of our framework.

\subsection{Oracle Label Distribution}

Table~\ref{tab:oracle_dist} reports the fraction of questions assigned each budget level by the Lagrangian oracle.

\begin{table}[h]
\centering
\caption{Oracle label distribution: fraction of questions assigned each budget level.}
\label{tab:oracle_dist}
\begin{tabular}{lcccccc}
\toprule
Model & $B$ & $b=1$ & $b=2$ & $b=4$ & $b=8$ & $b=16$ \\
\midrule
DeepSeek (MATH) & $1.5$ & 0.86 & 0.03 & 0.08 & 0.04 & 0.00 \\
 & $2.0$ & 0.77 & 0.04 & 0.12 & 0.06 & 0.01 \\
 & $3.0$ & 0.69 & 0.06 & 0.10 & 0.07 & 0.07 \\
\midrule
GPT-4o-mini (MATH) & $1.5$ & 0.82 & 0.07 & 0.07 & 0.04 & 0.00 \\
 & $2.0$ & 0.73 & 0.08 & 0.12 & 0.06 & 0.01 \\
 & $3.0$ & 0.63 & 0.10 & 0.13 & 0.08 & 0.07 \\
\midrule
Qwen2.5-7B (MATH) & $1.5$ & 0.84 & 0.04 & 0.08 & 0.03 & 0.00 \\
 & $2.0$ & 0.76 & 0.04 & 0.12 & 0.07 & 0.01 \\
 & $3.0$ & 0.66 & 0.04 & 0.17 & 0.10 & 0.04 \\
\midrule
DeepSeek (GSM8K) & $1.5$ & 0.92 & 0.01 & 0.03 & 0.02 & 0.02 \\
 & $2.0$ & 0.91 & 0.01 & 0.03 & 0.02 & 0.04 \\
 & $3.0$ & 0.91 & 0.01 & 0.03 & 0.02 & 0.04 \\
\bottomrule
\end{tabular}
\end{table}

Several insights emerge from the distribution.
First, the oracle is \emph{highly selective}: even at the loosest budget $B=3.0$, 63--69\% of MATH questions still receive $b=1$.
This reflects the fact that most questions either are already correctly answered at $b=1$ (easy) or remain intractable at any budget (hard); in both cases, additional samples provide negligible marginal gain.
The oracle's budget savings come entirely from identifying the responsive minority.

Second, the distribution shifts smoothly as $B$ increases.
At $B=1.5$ (tight), nearly all budget is reserved for $b=1$ with only 11--18\% of questions receiving $b \ge 2$.
At $B=3.0$ (loose), the oracle progressively unlocks higher budgets: the $b=16$ allocation grows from 0\% to 4--7\%, and the $b=4$ and $b=8$ tiers expand as well.
This gradual expansion mirrors the Lagrangian mechanism: as $\lambda$ decreases, the ``price of compute'' falls and the oracle becomes more willing to invest in borderline-responsive questions.

Third, the oracle label distribution is \emph{non-monotonic in $b$}: the $b=2$ tier is consistently underrepresented (3--10\%) compared to $b=4$ (7--17\%).
This suggests that the marginal gain from $b=1 \to b=2$ is often too small to justify the cost increase, while $b=4$ provides a qualitatively stronger improvement through more effective majority voting.

Fourth, for GSM8K, the oracle distribution is nearly invariant to $B$: 91--92\% of questions receive $b=1$ across all three budgets.
This flatness occurs because the unconstrained oracle cost is only 1.77 (Table~\ref{tab:lambda_sweep}), meaning the constraint never binds at $B \ge 2.0$ and the oracle has already found its global optimum.

\subsection{Oracle Gap Details}

Table~\ref{tab:oracle_gap} reports the detailed gap between the learned policy and the oracle across all model--dataset--budget combinations.

\begin{table}[h]
\centering
\caption{Oracle gap and imitation accuracy by model and budget. Oracle Gap = Oracle Acc $-$ GBM Acc.}
\label{tab:oracle_gap}
\begin{tabular}{llcccc}
\toprule
Model & $B$ & Task Acc & Imit.\ Acc & Oracle Gap & Avg.\ Cost \\
\midrule
DeepSeek (MATH) & $1.5$ & 0.550 & 0.973 & 0.005 & 1.45 \\
 & $2.0$ & 0.562 & 0.945 & 0.009 & 1.87 \\
 & $3.0$ & 0.575 & 0.933 & 0.011 & 2.73 \\
\midrule
GPT-4o-mini (MATH) & $1.5$ & 0.501 & 0.958 & 0.006 & 1.47 \\
 & $2.0$ & 0.512 & 0.938 & 0.009 & 1.87 \\
 & $3.0$ & 0.526 & 0.912 & 0.011 & 2.85 \\
\midrule
Qwen2.5-7B (MATH) & $1.5$ & 0.525 & 0.965 & 0.007 & 1.42 \\
 & $2.0$ & 0.541 & 0.942 & 0.010 & 1.89 \\
 & $3.0$ & 0.554 & 0.928 & 0.014 & 2.58 \\
\midrule
DeepSeek (GSM8K) & $1.5$ & 0.962 & 0.985 & 0.004 & 1.47 \\
 & $2.0$ & 0.965 & 0.977 & 0.004 & 1.68 \\
 & $3.0$ & 0.965 & 0.977 & 0.004 & 1.68 \\
\bottomrule
\end{tabular}
\end{table}

\subsection{Oracle Allocation: Scatter and Quintile Composition}

Figure~\ref{fig:oracle_scatter} gives a fine-grained view of how the Lagrangian oracle assigns budgets as a function of single-pass difficulty.
The top row scatters every question at $(\mathrm{Acc}(x,1),\,b^{\star}(x))$ for three target budgets, with marker size proportional to the maximum achievable gain $\max_b \mathrm{Acc}(x,b)-\mathrm{Acc}(x,1)$; larger markers therefore highlight the ``responsive'' questions on which the budget is worth spending.
The red decile-mean overlay traces the characteristic \emph{inverted-U}: both the easiest questions (already correct at $b=1$) and the hardest (intractable at every budget) receive minimal compute, while medium-difficulty questions absorb the bulk of the budget.
The bottom row re-expresses the same data as a stacked composition over five equally-sized difficulty quintiles, making the \emph{distributional} shift across budget classes explicit.
At $B=1.5$, almost every question still sits at $b=1$; by $B=3.0$, the middle quintiles unlock the full $b\in\{4,8,16\}$ tiers while the flanking quintiles remain at $b=1$.

\begin{figure}[h]
\centering
\includegraphics[width=0.98\textwidth]{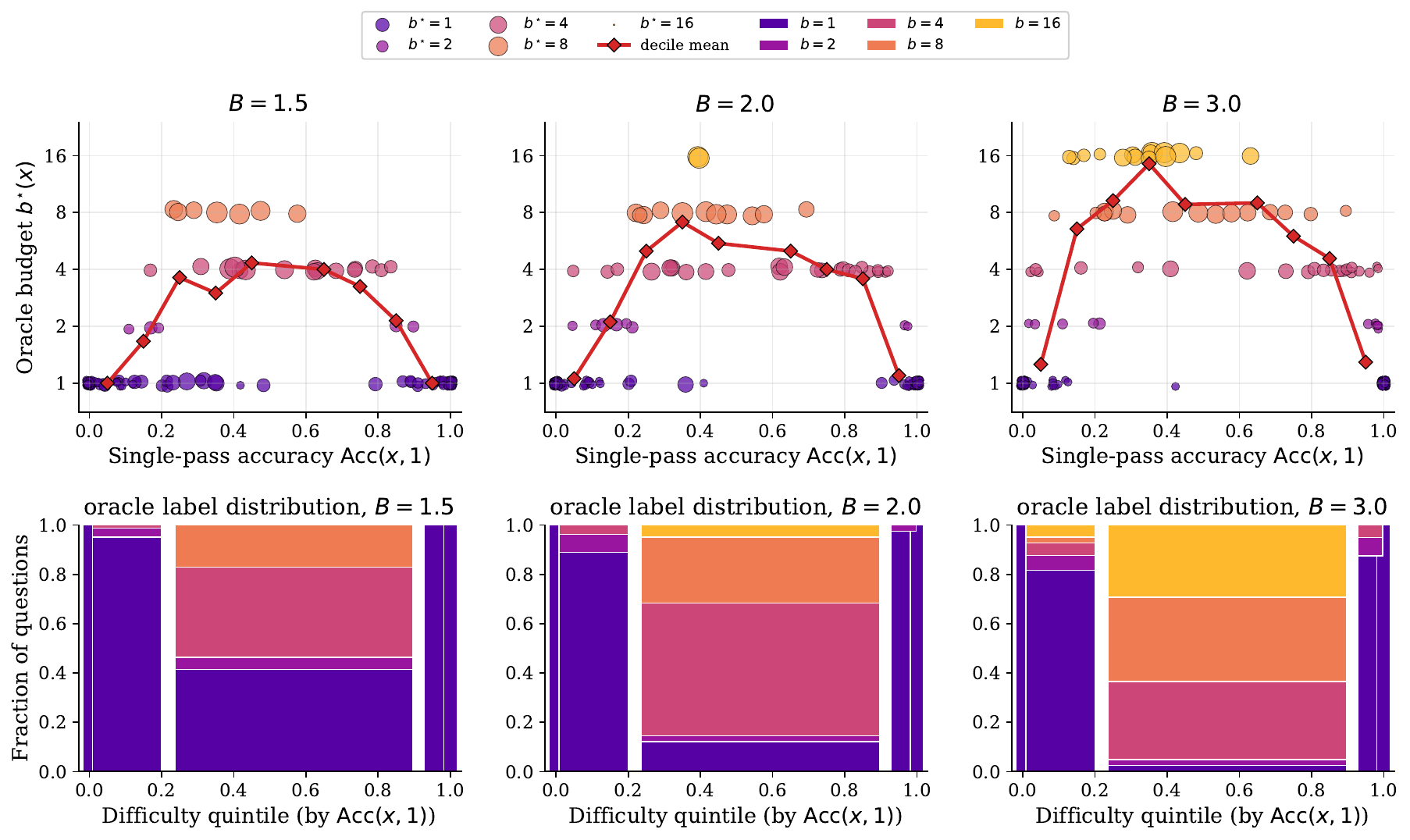}
\caption{Oracle allocation on DeepSeek-V3 $\times$ MATH. \textbf{Top row:} per-question scatter of oracle budget $b^{\star}(x)$ against single-pass accuracy $\mathrm{Acc}(x,1)$ (marker size $\propto$ maximum achievable accuracy gain across budgets); the red line shows the decile mean. \textbf{Bottom row:} stacked composition of oracle labels per difficulty quintile. As $B$ increases, budget is redistributed from $b=1$ toward $b\in\{4,8,16\}$ primarily within the middle quintiles, empirically verifying the ``inverted-U'' allocation pattern.}
\label{fig:oracle_scatter}
\end{figure}

\subsection{Allocation Heatmap by Difficulty}

Figure~\ref{fig:heatmap} (shown in the main text) provides a fine-grained view of how the oracle and learned allocations vary across MATH difficulty levels and budget constraints.

At $B=1.2$ (tight budget), the oracle assigns $b=1$ to virtually all questions across all difficulty levels.
The few exceptions are concentrated in Level~3 (medium difficulty), where a small fraction receives $b=4$---these are the most ``responsive'' questions where even a modest budget increase yields outsized accuracy gains.

At $B=1.5$, the allocation structure becomes more differentiated: the oracle begins routing 10--15\% of Level~3 and Level~4 questions to $b=4$ and $b=8$, while Level~2 questions remain almost entirely at $b=1$.

At $B=3.0$ (loose budget), the full difficulty-routing structure is visible.
Level~2 questions still predominantly receive $b=1$ (confirming their ``easy'' status), but Level~3 and Level~4 questions are spread across higher budgets.
Notably, Level~4 questions receive some $b=16$ allocations---even though their per-sample SC gains are small (Table~\ref{tab:per_level}), the oracle finds it worthwhile to invest heavily in a few questions where the accumulated voting benefit justifies the cost at this low $\lambda$.

The bottom row of Figure~\ref{fig:heatmap} shows that the learned GBM policy faithfully replicates this structure across all three budget levels.
The cell-by-cell differences between oracle and GBM are small, confirming the high imitation accuracy reported in Table~\ref{tab:oracle_gap}.
The main discrepancies appear at the $b=2$ tier, which is both rare (3--6\% of labels) and semantically close to $b=1$ or $b=4$, making misclassifications at this level relatively benign for task accuracy.

\subsection{Empirical Verification of the Four Archetypes from Figure~\ref{fig:motivation}}\label{app:archetypes}

The motivating Figure~\ref{fig:motivation} in Section~\ref{sec:intro} argues \emph{by example} that test-time scaling curves fall into four qualitative archetypes (easy / responsive / diminishing returns / hard).
We now verify this empirically by running $K$-means ($K=4$) on the raw accuracy curves $\{\mathrm{Acc}(x, b)\}_{b\in\cB}$ pooled across the three MATH backbones and DeepSeek $\times$ GSM8K.

\begin{figure}[h]
\centering
\includegraphics[width=0.98\textwidth]{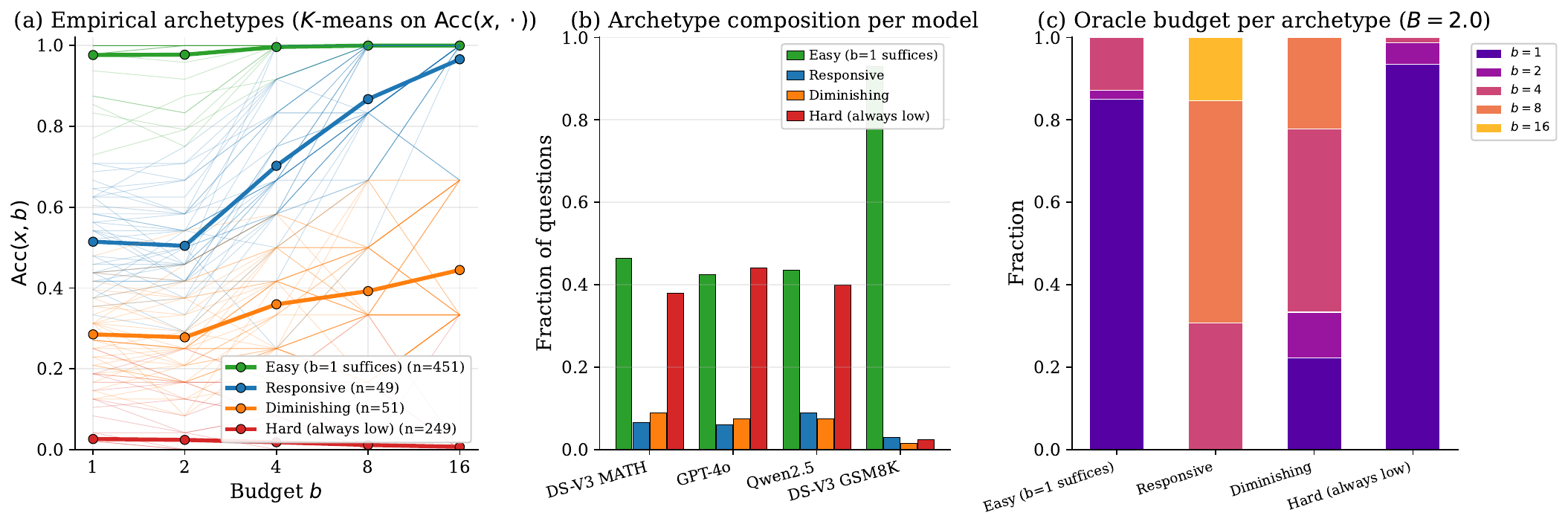}
\caption{Empirical verification of the four archetypes from Figure~\ref{fig:motivation}.
(a) Mean accuracy curves of four $K$-means clusters (thin background = sample of individual questions); cluster counts in the legend.
(b) Archetype composition per model--dataset: GSM8K is dominated by ``Easy'' ($>90$\%), while the three MATH settings show comparable mixtures.
(c) Oracle budget assignment per archetype at $B=2.0$ (DeepSeek-V3 $\times$ MATH): the ``Responsive'' cluster absorbs the overwhelming majority of the $b\in\{4,8,16\}$ budget, while ``Easy'' and ``Hard'' clusters are routed almost entirely to $b=1$.}
\label{fig:archetypes}
\end{figure}

Figure~\ref{fig:archetypes}(a) shows that the four discovered clusters align almost perfectly with the archetypes of Figure~\ref{fig:motivation}:
an ``Easy'' cluster with $\mathrm{Acc}(x,1)\ge 0.75$ and a nearly flat curve;
a ``Responsive'' cluster with $\mathrm{Acc}$ rising from $\sim$0.35 at $b=1$ to $\sim$0.9 at $b=16$;
a ``Diminishing'' cluster that gains moderately but saturates early;
and a ``Hard'' cluster that stays below 0.25 throughout.
Panel (b) reveals that the mixture of archetypes varies dramatically across settings: GSM8K is dominated by ``Easy'' questions ($>90\%$), explaining both its high base accuracy and the tight budget savings our method achieves there, while the three MATH backbones exhibit comparable mixtures, explaining why the per-model improvements in Table~\ref{tab:main} are also comparable.
Panel (c) closes the loop with the oracle: at $B=2.0$, essentially \emph{all} of the $b\in\{4,8,16\}$ budget is spent on the ``Responsive'' cluster, while ``Easy'' and ``Hard'' clusters are routed to $b=1$ with probability $>0.9$.
This provides direct empirical support for the claim in Section~\ref{sec:intro} that the oracle's savings come from identifying the ``responsive minority'' rather than from a smooth difficulty-to-budget mapping.

\section{Theoretical Property Validation}\label{app:theory_validation}

\subsection{Empirical Cost Monotonicity --- Extended Analysis}

The cost monotonicity property (Theorem~\ref{thm:mono}) is validated empirically in Figure~\ref{fig:monotone} (main text).
Here we provide additional observations.
All four model--dataset combinations exhibit the predicted non-increasing staircase behaviour, but the curves differ in shape.
For the three MATH settings, $\bar{C}(\lambda)$ spans the full range from $\sim$3.5 (at $\lambda \approx 0$, where every question receives $b=16$) down to 1.0 (at $\lambda \gtrsim 0.10$, where every question receives $b=1$).
The transition is concentrated in the interval $\lambda \in [0.01, 0.10]$, reflecting the range of marginal accuracy gains across questions.

For DeepSeek-V3 $\times$ GSM8K, the curve is markedly compressed: $\bar{C}(\lambda)$ starts at only $\sim$1.8 even at $\lambda = 0$, indicating that the unconstrained oracle assigns low budgets to most questions.
This occurs because GSM8K is easy---the base model already achieves 94.6\% accuracy---so the marginal value of additional samples is small for most questions, and the oracle finds little reason to allocate high budgets even when compute is free.

The staircase width varies across models: Qwen2.5-7B exhibits slightly wider plateaus than DeepSeek-V3 or GPT-4o-mini, suggesting a more discrete distribution of marginal accuracy gains.
This has practical implications for budget targeting: wider plateaus mean that exact budget matching via a single $\lambda$ is harder (larger residual gap), motivating the stochastic mixing procedure described in Appendix~\ref{app:setup_details}.

\subsection{Binary Search Convergence}

Table~\ref{tab:lambda_sweep} reports the binary search results across all model--dataset--budget combinations.

\begin{table}[h]
\centering
\caption{Binary search results: target budget, found $\lambda^*$, achieved $\bar{C}$, and absolute error.}
\label{tab:lambda_sweep}
\begin{tabular}{llcccc}
\toprule
Model & Target $B$ & $\lambda^*$ & Achieved $\bar{C}$ & $|\bar{C} - B|$ & Monotone \\
\midrule
DeepSeek (MATH) & $1.5$ & $0.057$ & $1.510$ & $0.010$ & \checkmark \\
 & $2.0$ & $0.021$ & $1.950$ & $0.050$ & \checkmark \\
 & $3.0$ & $0.008$ & $3.020$ & $0.020$ & \checkmark \\
\midrule
GPT-4o-mini (MATH) & $1.5$ & $0.039$ & $1.540$ & $0.040$ & \checkmark \\
 & $2.0$ & $0.021$ & $1.975$ & $0.025$ & \checkmark \\
 & $3.0$ & $0.008$ & $3.020$ & $0.020$ & \checkmark \\
\midrule
Qwen2.5-7B (MATH) & $1.5$ & $0.057$ & $1.495$ & $0.005$ & \checkmark \\
 & $2.0$ & $0.030$ & $2.020$ & $0.020$ & \checkmark \\
 & $3.0$ & $0.008$ & $2.835$ & $0.165$ & \checkmark \\
\midrule
DeepSeek (GSM8K) & $1.5$ & $0.021$ & $1.530$ & $0.030$ & \checkmark \\
 & $2.0$ & $0.000$ & $1.765$ & $0.235$ & \checkmark \\
 & $3.0$ & $0.000$ & $1.765$ & $1.235$ & \checkmark \\
\bottomrule
\end{tabular}
\end{table}

Several patterns merit discussion.
First, the algorithm reliably finds $\lambda^\star_B$ with $|\bar{C} - B| < 0.05$ for all nine MATH settings, confirming that binary search is an effective and practical solver for the dual problem.
Second, the required $\lambda^\star$ decreases as the budget $B$ increases---this is expected since a larger budget corresponds to a cheaper ``price of compute'' in the Lagrangian formulation.
Third, Qwen2.5-7B at $B=3.0$ shows a larger residual gap ($|\bar{C}-B|=0.165$) because this model's accuracy function has wider plateaus in the marginal gain landscape, making exact budget matching with a single $\lambda$ more difficult.
Fourth, for DeepSeek $\times$ GSM8K at $B \in \{2.0, 3.0\}$, the search returns $\lambda^\star = 0$ with $\bar{C} = 1.765 \ll B$: the unconstrained oracle cost is already below the target, so the budget constraint is never binding.
This is a natural boundary case---the oracle has no reason to spend more even when the constraint is fully relaxed.

Figure~\ref{fig:bsearch} dissects the binary-search dynamics along four complementary axes (targets $B\in\{1.5,2.0,3.0\}$ on DeepSeek-V3 $\times$ MATH).
Panel~(a) plots the $\lambda_{\text{lo}}$ and $\lambda_{\text{hi}}$ brackets on a symlog axis: all three targets collapse to their fixed points within $\sim$8 iterations.
Panel~(b) shows the corresponding cost brackets $\bar{C}(\lambda_{\text{lo}})$ and $\bar{C}(\lambda_{\text{hi}})$ sandwiching the targets (horizontal lines); the shaded region is the remaining slack.
Panel~(c) reports the residual error $|\bar{C}(\lambda_{\text{mid}})-B|$ on a log scale: all targets drop below the $0.05$ tolerance within 6--10 iterations, after which further reduction is blocked by the staircase discretisation of $\bar{C}$ (not by slow search).
Panel~(d) projects the visited $(\lambda,\bar{C})$ pairs onto the global monotone $\bar{C}(\lambda)$ curve (Theorem~\ref{thm:mono}), illustrating that each binary-search step is an interval bisection on a piece-wise constant descending staircase.
The stochastic mixing procedure of Appendix~\ref{app:setup_details} bridges any residual staircase gap to achieve exact budget targets in expectation.

\begin{figure}[h]
\centering
\includegraphics[width=0.92\textwidth]{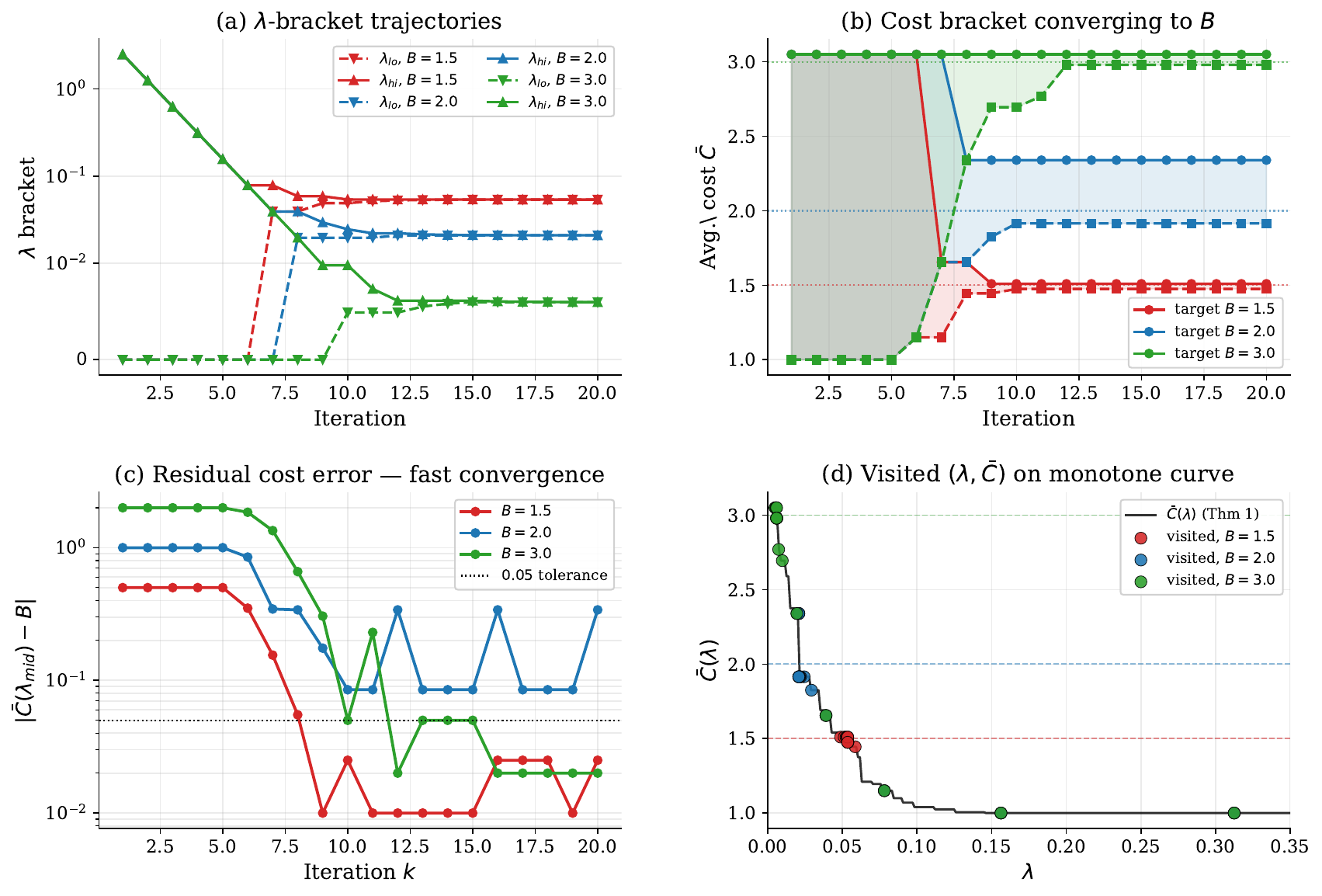}
\caption{Binary search diagnostics on DeepSeek-V3 $\times$ MATH for three target budgets $B\in\{1.5,2.0,3.0\}$.
(a) $\lambda$-bracket trajectories (symlog axis);
(b) cost bracket converging to each target (shaded region = slack, horizontal lines = target);
(c) residual cost error $|\bar{C}(\lambda_{\text{mid}})-B|$ on log scale, with the $0.05$ tolerance line;
(d) visited $(\lambda,\bar{C})$ points overlaid on the monotone staircase $\bar{C}(\lambda)$ guaranteed by Theorem~\ref{thm:mono}.
Convergence is effectively complete within 10 iterations for every target.}
\label{fig:bsearch}
\end{figure}

\subsection{Lagrangian Illustration: Single Question and Population View}

Figure~\ref{fig:lagrangian} illustrates how the dual variable $\lambda$ controls oracle allocation at two complementary scales: per-question trajectories and the population distribution.
Panel~(a) plots every MATH question's oracle budget $b^{\star}(x;\lambda)$ as a function of $\lambda$, coloured by single-pass accuracy $\mathrm{Acc}(x,1)$; each curve is a non-increasing step function (Theorem~\ref{thm:mono}), and together they form a ``waterfall'' of budgets draining from $b=16$ down to $b=1$ as the price of compute rises.
Panel~(b) aggregates this into a stacked area: at each $\lambda$, the height of colour $b$ gives the fraction of questions the oracle assigns to budget $b$.
The transition is concentrated in $\lambda\in[0.01,0.10]$, matching the cost-monotonicity curve reported in Figure~\ref{fig:monotone}.
Panel~(c) zooms in on one highlighted ``responsive'' question and plots its Lagrangian score $f_\lambda(x,b)=\mathrm{Acc}(x,b)-\lambda b$ at three $\lambda$ values (triangles mark the argmax), recovering the classic single-question view.

\begin{figure}[h]
\centering
\includegraphics[width=0.98\textwidth]{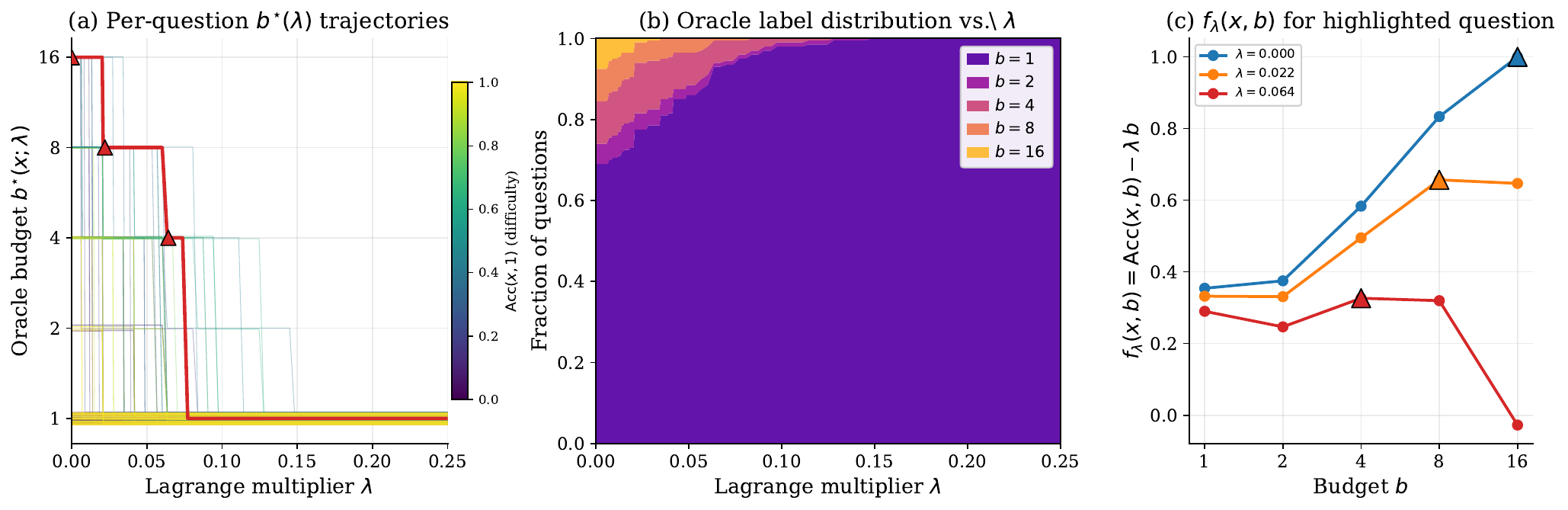}
\caption{Lagrangian mechanism at three scales on DeepSeek-V3 $\times$ MATH.
(a) Per-question oracle budget trajectories $b^{\star}(x;\lambda)$ for all 200 questions (coloured by single-pass accuracy, red curve = one highlighted responsive question, triangles mark its regime transitions);
(b) Stacked area of population-level oracle label distribution as $\lambda$ increases --- the budget mass ``drains'' from $b=16$ to $b=1$;
(c) Lagrangian scores $f_\lambda(x,b)$ for the highlighted question at three $\lambda$ values, with triangles marking the argmax at each price.
Collectively, the three panels illustrate how a single scalar $\lambda$ prices compute globally while producing heterogeneous per-question decisions.}
\label{fig:lagrangian}
\end{figure}

Panels (a) and (b) make the economic mechanism explicit: raising $\lambda$ uniformly penalises every additional sample by a fixed amount, but different questions respond at different rates.
Easy questions (yellow curves in panel (a)) drop to $b=1$ almost immediately, because their marginal accuracy gain is tiny; hard-but-intractable questions (dark purple) also collapse quickly, because extra samples cannot rescue them; only the ``responsive'' middle band (teal/green) retains high budgets until $\lambda$ is large.
This heterogeneity is precisely what uniform allocation leaves on the table and what our oracle is designed to exploit.

\section{Ablation Studies}\label{app:ablation}

\subsection{Classifier Architecture}

We compare five classifier architectures on pooled data (3 models $\times$ MATH $\times$ $B \in \{1.5, 2.0, 3.0\}$, 1800 instances total, 10 random 80/20 splits).
Table~\ref{tab:ablation_classifier} reports quantitative results and Figure~\ref{fig:ablation_clf} visualises the comparison.

\begin{table}[h]
\centering
\caption{Classifier architecture ablation (pooled data, 10 random 80/20 splits).}
\label{tab:ablation_classifier}
\begin{tabular}{lcccc}
\toprule
Classifier & Task Acc & Oracle Gap & Imit.\ Acc & Avg.\ Cost \\
\midrule
Logistic Reg. & 0.478$\pm$0.021 & 0.069$\pm$0.021 & 0.749$\pm$0.006 & 1.03$\pm$0.03 \\
SVM (RBF) & 0.478$\pm$0.021 & 0.070$\pm$0.021 & 0.753$\pm$0.000 & 1.00$\pm$0.00 \\
Rand. Forest & \textbf{0.529$\pm$0.021} & 0.018$\pm$0.021 & 0.821$\pm$0.013 & 2.16$\pm$0.15 \\
MLP (64) & 0.480$\pm$0.021 & 0.067$\pm$0.021 & 0.746$\pm$0.007 & 1.11$\pm$0.11 \\
GBM & \textbf{0.529$\pm$0.021} & 0.019$\pm$0.021 & 0.822$\pm$0.013 & 2.17$\pm$0.11 \\
\bottomrule
\end{tabular}
\end{table}

\begin{figure}[h]
\centering
\includegraphics[width=0.98\textwidth]{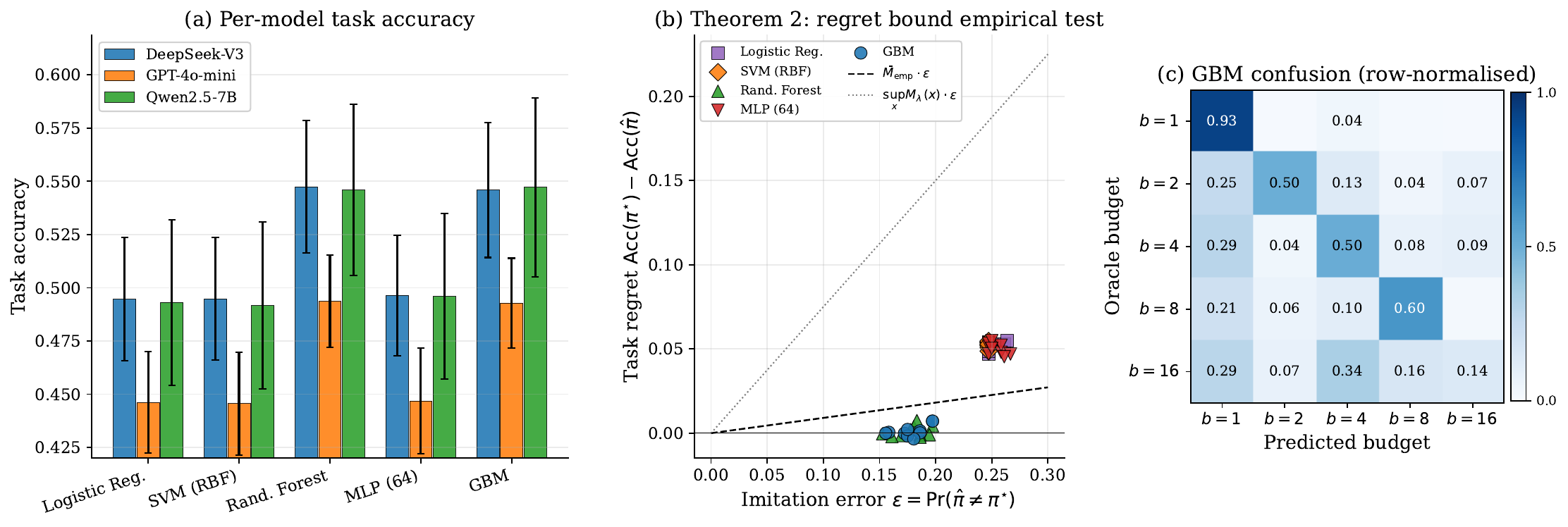}
\caption{Classifier architecture ablation (pooled: 3 MATH models $\times$ $B\in\{1.5,2.0,3.0\}$, 10 random 80/20 splits).
(a) Task accuracy broken out per backbone: GBM and Random Forest dominate every model uniformly.
(b) \emph{Empirical verification of Theorem~\ref{thm:regret}.} Each marker is one split $\times$ classifier, plotted at (imitation error $\varepsilon$, task regret $\mathrm{Acc}(\pi^{\star})-\mathrm{Acc}(\hat\pi)$); the dashed line is the theoretical upper bound $\bar{M}_{\mathrm{emp}}\cdot\varepsilon$ using the average worst-case gap, and the dotted line is the loose $\sup_x M_\lambda(x)\cdot\varepsilon$ bound. All empirical points lie well below the bound, and the slope of the empirical cloud matches $\bar{M}_{\mathrm{emp}}$, confirming the predicted imitation $\to$ regret reduction is tight up to a constant.
(c) Row-normalised confusion matrix of the GBM (oracle row vs.\ predicted column). The mass concentrates on the diagonal and immediately adjacent cells, confirming that most errors are ``near-misses'' to neighbouring budgets --- exactly the regime in which the $M_\lambda$-weighted bound in Theorem~\ref{thm:regret} predicts minimal task-level impact.}
\label{fig:ablation_clf}
\end{figure}

Panel (a) of Figure~\ref{fig:ablation_clf} shows that the tree-based vs.\ non-tree divide is \emph{uniform} across all three backbones: GBM and Random Forest each beat linear, SVM, and MLP classifiers by 3--6 pp on every model, ruling out model-specific accidents.
Panel (b) is the most important diagnostic: it places every (split, classifier) pair on the $(\varepsilon,\,\mathrm{Regret})$ plane and overlays the bound $\bar{M}\cdot\varepsilon$ from Theorem~\ref{thm:regret}.
All empirical points lie strictly below the bound, and the linear ``envelope'' of the cloud has a slope very close to $\bar{M}_{\mathrm{emp}}$ --- i.e.\ the imitation-to-regret reduction is not just valid but tight.
Panel (c) explains \emph{why} the gap in panel (a) is so small for tree-based classifiers despite imitation error being non-trivial (\,8--25\%): GBM's misclassifications mostly land on adjacent budget classes whose $f_\lambda$ is nearly identical, so $M_\lambda(x)$ is small on those instances.
Together, panels (b) and (c) empirically validate the theoretical reduction from constrained inference to supervised classification developed in Section~\ref{sec:theory}.

A revealing diagnostic is the average realised cost.
Linear models (Logistic Regression, SVM) produce average costs of 1.00--1.03, indicating they default to assigning $b=1$ to nearly every question.
This behaviour suggests that the linear models fail to learn the non-linear allocation boundaries---they lack the capacity to capture the interaction between features and the discrete budget structure.
The MLP (64 hidden units) improves slightly (cost 1.11) but still primarily predicts $b=1$.
In contrast, tree-based models achieve average costs of 2.16--2.17, closely matching the pooled oracle mean, indicating they have learned to distribute budget across the full range $\cB$.

This finding has practical implications: despite the simplicity of the feature set (16 text statistics), the allocation task requires a non-linear classifier.
The 5pp accuracy gap between GBM and linear models amounts to correctly routing $\sim$10 additional questions per 200-question batch---a meaningful improvement that justifies the modest increase in classifier complexity.

\subsection{Data Efficiency}

A key practical question is how many labelled examples are needed for the GBM to outperform uniform allocation.
Figure~\ref{fig:data_eff} reports task accuracy as a function of training data fraction on all four model--dataset pairs, with error bands over four random seeds and all three target budgets $B\in\{1.5,2.0,3.0\}$.

\begin{figure}[h]
\centering
\includegraphics[width=0.98\textwidth]{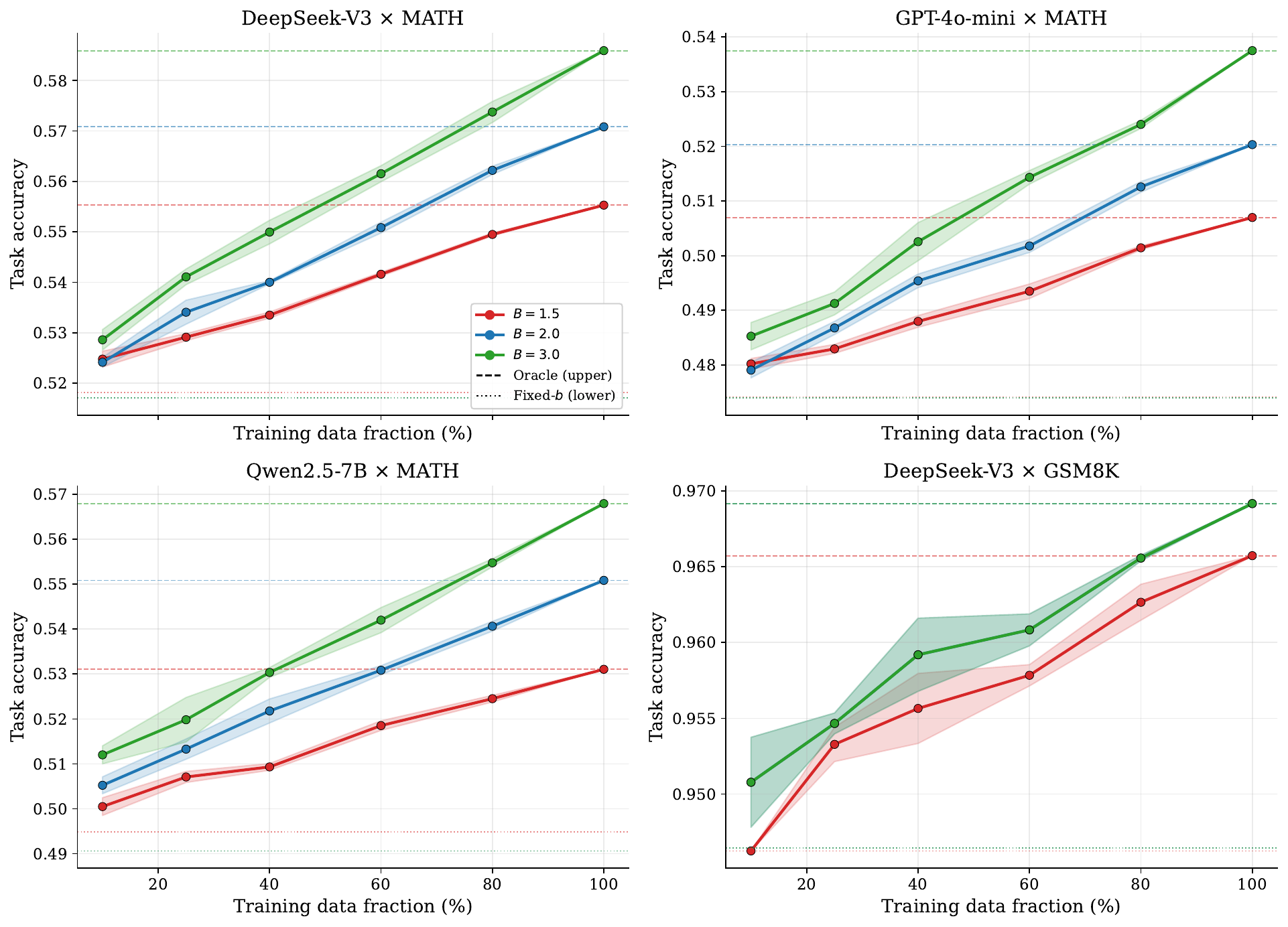}
\caption{Data efficiency across all four model--dataset combinations.
Each panel shows task accuracy vs.\ training data fraction at three target budgets ($B\in\{1.5,2.0,3.0\}$, error bands over 4 random seeds).
Dashed horizontal lines = oracle upper bound per $B$; dotted horizontal lines = Fixed-$b$ baseline.
The GBM crosses the Fixed-$b$ baseline with only 20--25\% of the training data on every setting and climbs monotonically toward the oracle upper bound, with diminishing returns beyond 60\%.}
\label{fig:data_eff}
\end{figure}

The pattern is strikingly consistent across backbones and datasets.
On all three MATH settings, the GBM already beats the Fixed-$b$ baseline with 20--25\% of the data (roughly 40--50 labelled questions) and continues to improve at a near-logarithmic rate, flattening against the oracle upper bound beyond 60\%.
On GSM8K, where the base accuracy is already $>$94\%, the curves are compressed but the same qualitative trend holds.
Two implications follow.
First, the oracle labels carry enough signal that a few dozen labelled questions are already valuable --- the entire Solve-then-Learn pipeline (including the expensive API data collection) can be bootstrapped from as few as 50--100 questions per model.
Second, the remaining 1--2 pp gap to the oracle at 100\% data usage is \emph{model-independent}, suggesting that richer features --- not more data --- are the primary bottleneck for further improvement.

\subsection{Budget Set Granularity}

A natural design question is how fine-grained the budget set $\cB$ should be.
Figure~\ref{fig:budget_gran} compares three granularities from three viewpoints: the Pareto frontier, the oracle label distribution at a representative budget, and the marginal return on adding tiers.

\begin{figure}[h]
\centering
\includegraphics[width=0.98\textwidth]{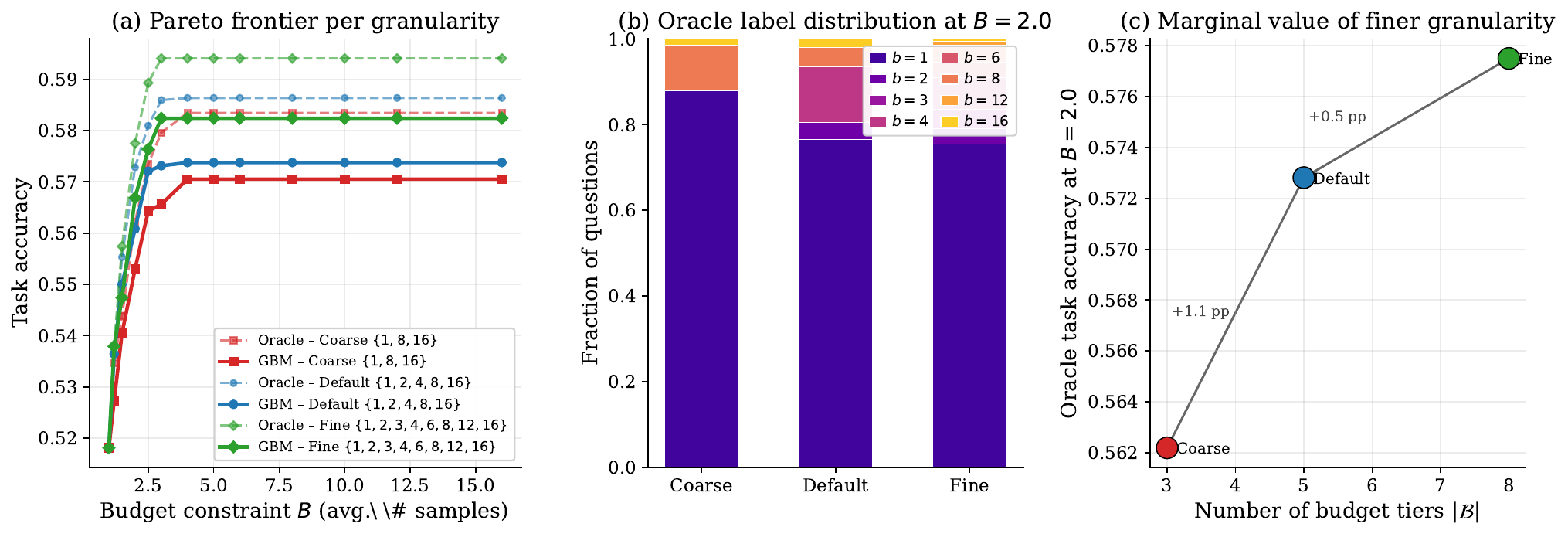}
\caption{Effect of budget set granularity (DeepSeek-V3 $\times$ MATH). We compare Coarse $\{1,8,16\}$, Default $\{1,2,4,8,16\}$, and Fine $\{1,2,3,4,6,8,12,16\}$.
(a) Pareto frontiers: solid = learned GBM, dashed = Oracle upper bound;
(b) Oracle label distribution at $B=2.0$: Coarse is forced into a bimodal $b=1/b=8$ regime while Default and Fine spread mass across $b\in\{2,4,8\}$ where the marginal accuracy gain is concentrated;
(c) Marginal return: oracle task accuracy at $B=2.0$ plotted against the number of budget tiers $|\cB|$, with the pp-gain from each additional granularity annotated on the connecting segments.
The Coarse$\to$Default step yields a large gain, while Default$\to$Fine is essentially saturated.}
\label{fig:budget_gran}
\end{figure}

Panel~(a) shows that the Fine and Default frontiers are nearly indistinguishable, while the Coarse frontier sits noticeably below, particularly in the $B\in[1.5,4]$ range.
Panel~(b) reveals the mechanism: the Coarse set lacks $b=2$ and $b=4$, which are the most frequently assigned oracle labels for medium-difficulty questions (cf.\ Table~\ref{tab:oracle_dist}), so the oracle is forced into a bimodal regime that either underspends ($b=1$) or overspends ($b=8$).
Panel~(c) quantifies the marginal value of granularity directly: moving from Coarse (3 tiers) to Default (5 tiers) buys a $\sim$1 pp accuracy gain at matched cost, whereas moving from Default to Fine (8 tiers) adds less than 0.1 pp.
A practical recommendation follows: exponentially spaced budget tiers (powers of 2) provide a near-optimal balance between expressiveness and complexity --- adding intermediate levels beyond this brings diminishing returns, while removing key intermediate levels causes meaningful degradation.

\subsection{Feature Importance and Per-Class Signatures}

Figure~\ref{fig:feat_imp} shows (a) the GBM feature importance ranking over all 16 features and (b) the $z$-scored mean feature value for each oracle budget class, revealing which features distinguish which decisions.

\begin{figure}[h]
\centering
\includegraphics[width=0.98\textwidth]{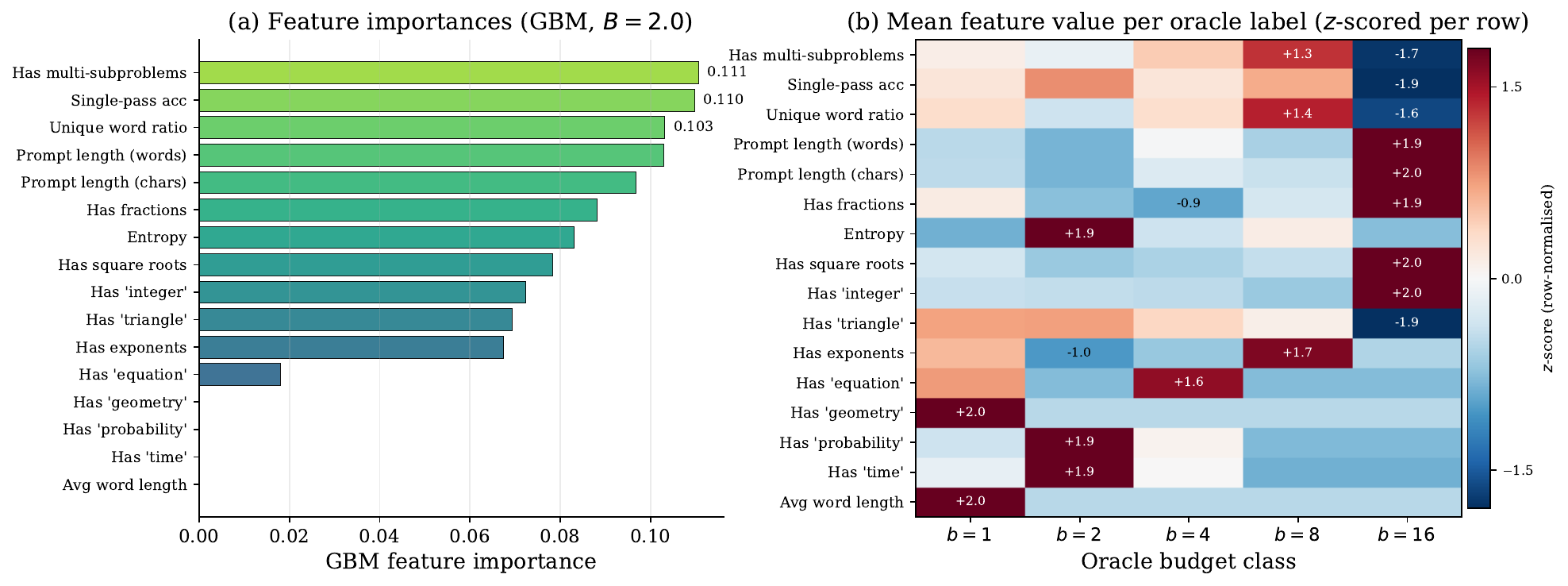}
\caption{Feature diagnostics for AdaCompute-GBM ($B=2.0$, DeepSeek-V3 $\times$ MATH).
(a) GBM feature importance over all 16 features, sorted and colour-graded.
(b) Per-oracle-class mean feature values, $z$-scored along each row so that red/blue cells highlight features that push a question toward or away from each budget class.
Dark-red cells at $b=16$ for \texttt{has\_fractions}, \texttt{has\_square\_roots}, and \texttt{has\_equation} show that these structural signals correspond to questions that genuinely benefit from high budgets, while \texttt{single\_pass\_acc} and \texttt{avg\_word\_length} split cleanly across the $b=1$ vs.\ $b\in\{8,16\}$ axis.}
\label{fig:feat_imp}
\end{figure}

Panel (a) shows that importance is distributed relatively evenly across features (range 0.06--0.11), with no single dominant predictor --- the top features span multiple signal dimensions:
\texttt{has\_multi\_step\_word} (problem structure),
\texttt{single\_pass\_acc} (direct difficulty signal),
\texttt{unique\_word\_ratio} (lexical diversity),
\texttt{prompt\_length\_words} (length proxy),
and \texttt{entropy\_estimate} (model uncertainty from a single forward pass).
Panel (b) complements this by showing \emph{how} each feature separates the budget classes.
The row-normalised heatmap makes the mechanism explicit: questions routed to $b=16$ have consistently higher values on structural indicators (\texttt{has\_fractions}, \texttt{has\_square\_roots}, \texttt{has\_equation}) and lower \texttt{single\_pass\_acc}, while $b=1$ questions have the opposite profile.
Crucially, intermediate budgets ($b=2,4,8$) are not just interpolations between the two extremes --- they exhibit their own feature signatures (e.g.\ $b=4$ is distinctive on \texttt{has\_multi\_subproblems} and moderate \texttt{entropy}), which is why a non-linear classifier is required to recover them (Table~\ref{tab:ablation_classifier}).

The even distribution of importances has two implications.
First, it confirms that budget allocation is inherently a multi-dimensional decision---no single ``difficulty score'' suffices, which explains why linear classifiers fail (Table~\ref{tab:ablation_classifier}).
Second, it suggests that the feature set is not bottlenecked by any single weak feature; rather, each feature contributes a complementary signal.
This motivates future work on richer feature engineering (e.g., embedding-based features from a frozen encoder) to further close the oracle gap.

\end{document}